\documentclass[letterpaper,10pt,journal,twoside]{IEEEtran}
\usepackage{cite}
\usepackage{amsmath,amssymb,amsfonts}
\usepackage{algorithm,algorithmic}
\usepackage{graphicx}
\usepackage{times}
\usepackage{float}
\usepackage{color}
\usepackage{xcolor}
\usepackage{amsthm}
\usepackage[caption=false,font=footnotesize]{subfig}
\usepackage{float}
\usepackage{siunitx}
\usepackage[prependcaption,textsize=tiny,color=blue!20!white]{todonotes}
\newtheorem{theorem}{Theorem}
\newtheorem{lemma}{Lemma}

\theoremstyle{definition}

\theoremstyle{remark}
\newtheorem{rmrk}{Remark}
\newtheorem{asmp}{Assumption}

\usepackage{dsfont}











\title{\huge Chance-Constrained Trajectory Optimization for Safe Exploration and Learning of Nonlinear Systems}

\author{Yashwanth Kumar Nakka, Anqi Liu, Guanya Shi, Anima Anandkumar, Yisong Yue, and Soon-Jo Chung\thanks{Manuscript received: May, 10, 2020; Revised: August, 11, 2020; Accepted: October, 1, 2020.}
         \thanks{This paper was recommended for publication by Editor Nancy Amato upon evaluation of the Associate Editor and Reviewers' comments.}
         \thanks{The authors are with California Institute of Technology.
         {{Email:}\{ynakka, anqiliu, gshi, anima, yyue, sjchung\}@caltech.edu}. This work was funded in part by the Jet Propulsion Laboratory, Caltech and the Raytheon Company. A. Liu is supported by a PIMCO Postdoctoral Fellowship. We acknowledge the contribution of  Irene S. Crowell in implementing Info-SNOC.}
         \thanks{Digital Object Identifier (DOI): see top of the page.}}
\begin{document}
\maketitle
\markboth{IEEE Robotics and Automation Letters. Preprint Version. Accepted: October, 2020}{Nakka \MakeLowercase{\textit{et al.}}: Chance-Constrained Trajectory Optimization for Safe Exploration and Learning of Nonlinear Systems}
\begin{abstract} 
Learning-based control algorithms require data collection with abundant supervision for training. Safe exploration algorithms ensure the safety of this data collection process even when only partial knowledge is available. We present a new approach for optimal motion planning with safe exploration that integrates chance-constrained stochastic optimal control with dynamics learning and feedback control. We derive an iterative convex optimization algorithm that solves an \underline{Info}rmation-cost \underline{S}tochastic \underline{N}onlinear \underline{O}ptimal \underline{C}ontrol problem (Info-SNOC). The optimization objective encodes control cost for performance and exploration cost for learning, and the safety is incorporated as distributionally robust chance constraints. The dynamics are predicted from a robust regression model that is learned from data. The Info-SNOC algorithm is used to compute a sub-optimal pool of safe motion plans that aid in exploration for learning unknown residual dynamics under safety constraints. A stable feedback controller is used to execute the motion plan and collect data for model learning. We prove the safety of rollout from our exploration method and reduction in uncertainty over epochs, thereby guaranteeing the consistency of our learning method. We validate the effectiveness of Info-SNOC by designing and implementing a pool of safe trajectories for a planar robot. We demonstrate that our approach has higher success rate in ensuring safety when compared to a deterministic trajectory optimization approach.
\end{abstract}
\begin{IEEEkeywords}Motion and Path Planning, Model Learning for Control, Machine Learning for Robot Control, Chance Constraints\end{IEEEkeywords}
\IEEEpeerreviewmaketitle
\vspace{-5pt}
\section{Introduction}
\IEEEPARstart{R}obots deployed in the real world often operate in unknown or partially-known environments. Modeling the complex dynamic interactions with the environment requires high-fidelity techniques that are often computationally expensive. Machine-learning models can remedy this difficulty by approximating the dynamics from data~\cite{shi2019neural,dean2017sample,ostafew2016learning,punjani2015deep}. The learned models typically require off-line training with labeled data that are often not available or hard to collect in many applications. Safe exploration is an efficient approach to collect ground truth data by safely interacting with the environment. 

We present an episodic learning and control algorithm for safe exploration, as shown in Fig.~\ref{fig:work-flow}, that integrates learning, stochastic trajectory planning, and rollout for active and safe data collection. \emph{Rollout} is defined as executing the computed safe trajectory and policy using a stable feedback controller. The planning problem is formulated as an Information-cost Stochastic Nonlinear Optimal Control (Info-SNOC) problem that maximizes exploration and minimizes the control effort. Safety constraints are formulated as chance constraints. 

The propagation of uncertainty in the dynamic model and chance constraints in Info-SNOC are addressed by projecting the problem to the generalized polynomial chaos (gPC) space and computing a distributionally robust~\cite{calafiore2006distributionally,nakka2019nsoc} convex approximation. By building on~\cite{nakka2019nsoc}, we derive a sequential convex optimization solution to the Info-SNOC problem to plan a pool of sub-optimal safe and information-rich trajectories with the learned approximation of the dynamics. A sample of the trajectory pool is used as an input to the rollout stage to collect new data. To ensure real-time safety, the nonlinear feedback controller with a safety filter used in the rollout stage certifies bounded stochastic stability~\cite{dani2014observer}. The new data is used to learn an improved dynamic model. 

\begin{figure}
    \centering
    \includegraphics[height=2.3in,width=3.4in]{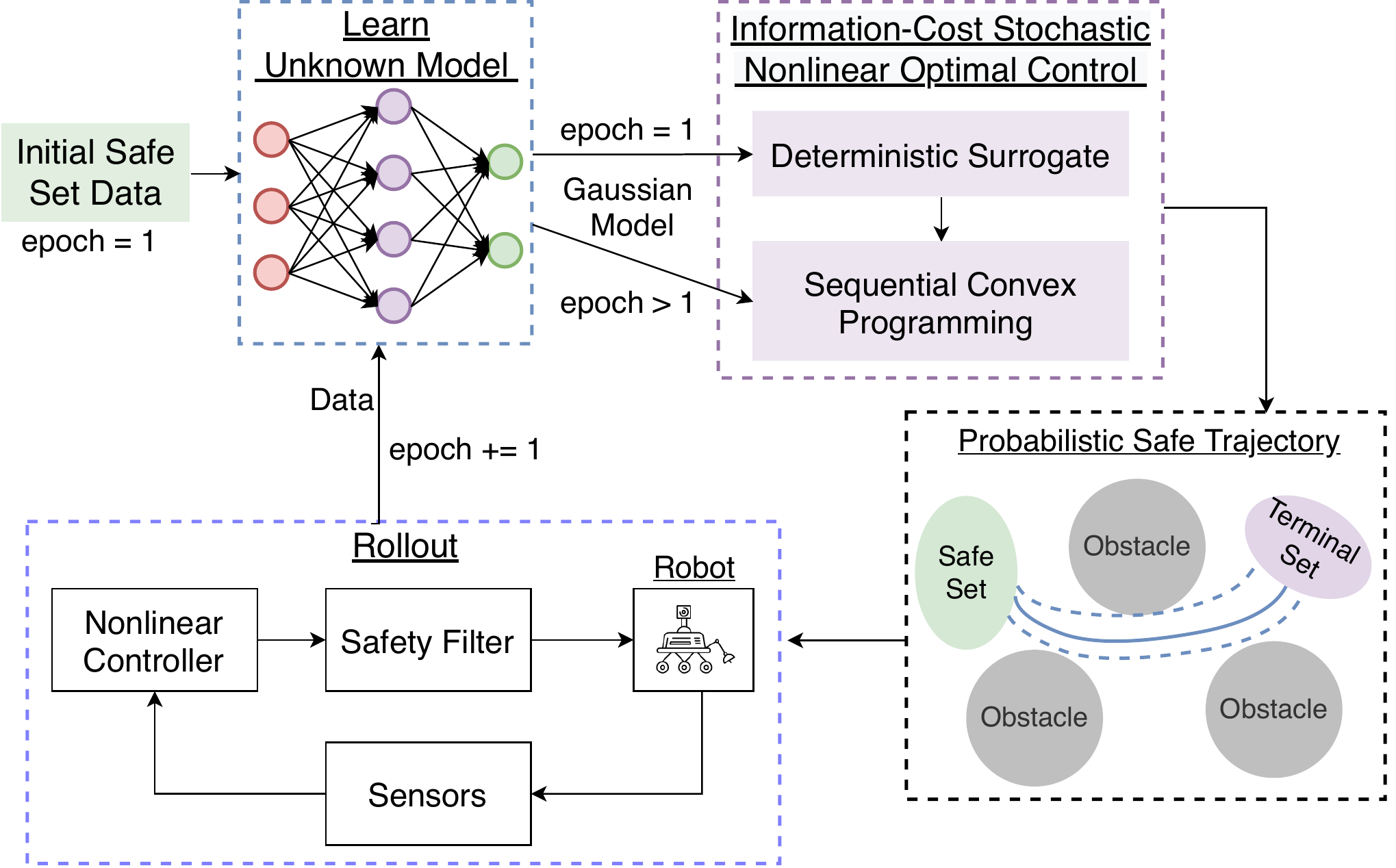}
    \caption{\textcolor{black}{An episodic framework for safe exploration using chance-constrained trajectory optimization. An initial estimate of the dynamics is computed using a known safe set~\cite{koller2018learning} and control policy. A probabilistic safe trajectory and policy that satisfies safety chance-constraints is computed using Info-SNOC for the estimated dynamics. This policy is used for rollout with a stable feedback controller to collect data.}\vspace{-18pt}} 
    \label{fig:work-flow}
\end{figure}

\subsubsection*{Related Work}
\label{sec:related_work}
Safe exploration for continuous dynamical systems has been studied using the following three frameworks: learning-based model-predictive control (MPC), dual-control, and active dynamics learning. Learning-based MPC~\cite{aswani2013provably,ostafew2016robust,hewing2018cautious,koller2018learning} has been studied extensively for controlling the learned system. These deterministic techniques are also applied for planning an information trajectory\footnote{An information trajectory is defined as a result of optimal motion planning that has more information about the unknown model compared to a fuel-optimal trajectory.} to learn online. The approach has limited analysis on safety of the motion plans that use recursive feasibility argument and by appending a known safe control policy. In contrast, stochastic trajectory planning~\cite{blackmore2011chance,nakka2019nsoc,mesbah2018stochastic,wang2020fast} naturally extends to incorporate probabilistic learned dynamic model. The safety constraints formulated as joint chance constraints~\cite{blackmore2011chance} facilitate a new approach to analyze the safety of the motion plans computed using Info-SNOC. The effect of uncertainty in the learned model on the propagation of dynamics is estimated using the method of generalized polynomial chaos (gPC)~\cite{nakka2019nsoc} expansion for propagation, which has asymptotic convergence to the original distribution, and provides guarantees on the constraint satisfaction.

Estimating unknown parameters while simultaneously optimizing for performance has been studied as a dual control problem~\cite{feldbaum1960dual}. Dual control is an optimal control problem formulation to compute a control policy that is optimized for performance and guaranteed parameter convergence. In some recent work~\cite{mesbah2018stochastic,cheng2015robust}, the convergence of the estimate is achieved by using the condition of persistency of excitation in the optimal control problem. Our method uses Sequential Convex Programming (SCP)~\cite{dinh2010local,morgan2014model,morgan2016swarm} to compute the persistent excitation trajectory. Recent work~\cite{buisson2019actively} uses nonlinear programming tools to solve optimal control problems with an upper-confidence bound~\cite{srinivas2010gaussian} cost for exploration without safety constraints. We follow a similar approach but formulate the planning problem as an SNOC with distributionally robust linear and quadratic chance constraints for safety. The distributionally robust chance constraints are convexified via projection to the gPC space. The algorithm proposed in this paper can be used in the MPC framework with appropriate terminal conditions for feasibility and to solve dual control problems with high efficiency using the interior point methods.

The contributions of the paper are as follows: a) we propose a new safe exploration and motion planning method by directly incorporating safety as chance constraints and ensuring stochastic nonlinear stability in the closed-loop control along with a safety filter; b) we derive a new solution method to the Info-SNOC  problem for safe and optimal motion planning, while ensuring the consistency and reduced uncertainty of our dynamics learning method; and c) we use a multivariate robust regression model~\cite{chen2016robust} under a covariate shift assumption to compute the multi-dimensional uncertainty estimates of the unknown dynamics used in Info-SNOC.
\subsubsection*{Organization} We discuss the preliminaries on robust regression learning method, the Info-SNOC problem formulation, and results on deterministic approximations of chance constraints in Sec.~\ref{sec:prelim}. The Info-SNOC algorithm along with the implementation of rollout policy is presented in Sec.~\ref{sec:computation}. In Sec.~\ref{sec:analysis}, we derive the end-to-end safety guarantees. In Sec.~\ref{sec:example}, we discuss a result of applying the Info-SNOC algorithm to the nonlinear three degree-of-freedom spacecraft robot model~\cite{nakka2018six}. We conclude the paper in Sec.~\ref{sec:conclusion}.
\section{Preliminaries and Problem Formulation}
\label{sec:prelim}
\textcolor{black}{In this section, we discuss the preliminaries of the learning method used for modelling the unknown dynamics, the formulation of the Info-SNOC problem, and the deterministic surrogate projection using gPC.}
\subsection{Robust Regression For Learning}
An exploration step in active data collection for learning dynamics is regarded as a covariate shift problem. Covariate shift is a special case of distribution shift between training and testing data distributions. In particular, we aim to learn the unknown part of the dynamics $g(x,\bar{u})$ from state $x$ and control $\bar{u}$ defined in Sec.~\ref{sec:2B}. The covariate shift assumption indicates the conditional dynamics distribution given the states and controls remains the same while the input distribution of training ($\mathrm{Pr}_s(x,\bar{u})$) is different from the target input distribution ($\mathrm{Pr}_t(x,\bar{u})$). Robust regression is derived from a \emph{min--max} adversarial estimation framework, where the estimator minimizes a loss function and the adversary maximizes the loss under statistical constraints. The resulting Gaussian distributions provided by this learning framework are given below. For more technical details, we refer the readers to~\cite{chen2016robust,liu2019robust}. The output Gaussian distribution takes the form $\mathcal{N}(\mu_g, \Sigma_g)$ with mean $\mu_g$ and variance $\Sigma_g$:
\begin{align}
     \Sigma_g(x, \bar{u}, \theta_2)& = \left(2 \tfrac{\mathrm{Pr}_{s}(x,\bar{u})}{\mathrm{Pr}_{t}(x,\bar{u})} \theta_{2} + \Sigma_0^{-1}\right)^{-1}, \\
    \mu_g(x,\bar{u},\theta_1) &= \Sigma_g(x,\bar{u}, \theta_2)\left(-2\tfrac{\mathrm{Pr}_{s}(x,\bar{u})}{\mathrm{Pr}_{t}(x,\bar{u})} \theta_{1} \phi(x,\bar{u})  + \mu_0\Sigma_0^{-1}\right),\notag 
\end{align}
where $\mathcal{N}(\mu_0, \Sigma_0)$ is a non-informative (i.e., large $\Sigma_0$) base distribution, and $\phi(x,\bar{u})$ is the feature function that is learned using neural networks from data. The model parameters $\theta_1$ and $\theta_2$ are selected by maximizing the target log likelihood. The density ratio $\frac{\mathrm{Pr}_{s}(x,\bar{u})}{\mathrm{Pr}_{t}(x,\bar{u})}$ is estimated from data beforehand. Robust regression can handle multivariate outputs with correlation efficiently by incorporating neural networks and predicting a multivariate Gaussian distribution, whereas traditional methods like Gaussian process regression suffer from high-dimensions and require heavy tuning of kernels \cite{liu2019robust}.
\vspace{-7pt}
\subsection{Optimal and Safe Planning Problem}\label{sec:2B}
In this section, we present the finite-time chance-constrained stochastic optimal control problem formulation~\cite{nakka2019nsoc} used to design an information-rich trajectory. The optimization has control effort and terminal cost as performance objectives, and  the safety is modelled as a joint chance constraint of a set $\mathcal{F}$ defined by a polytope or a quadratic constraint. The full stochastic optimal control problem is as follows:\begin{align}
J^{*} =  & \underset{x(t),\bar{u}(t)}{\text{min}}
& & \scalebox{0.85}{$\mathbb{E} \left[\int_{t_{0}}^{t_{f}}J(x(t),\bar{u}(t))dt + J_f(x(t),\bar{u}(t))\right]$} \\
& \text{s.t.}
& & \scalebox{0.87}{$\dot{x}(t) = f(x(t),\bar{u}(t)) + \hat{g}(x(t),\bar{u}(t))$} \label{eq:approximate_dynamics}\\
&  & & \scalebox{0.9}{$\mathrm{Pr} (x(t) \in \mathcal{F}) \geq 1-\epsilon, \quad\forall t \in [t_{0},t_{f}]$} \\
&  & & \scalebox{0.9}{$\bar{u}(t) \in \mathcal{U} \quad \forall t \in [t_{0},t_{f}]$} \label{eq:control_bound}\\
&  & & \scalebox{0.9}{$x(t_{0}) = x_{0} \quad \mathbb{E}(x(t_f)) = \mu_{x_{f}}$},\label{eq:init_terminal}
\end{align}
where $x(t)\in \mathcal{X} \subseteq \mathbb{R}^n$ denotes the state of the dynamics, $x_{0}$ and $x_f$ are the initial and the terminal states respectively, the control $\bar{u} \in \mathcal{U} \subseteq \mathbb{R}^m$ is deterministic, $\hat{g}$ is the learned probabilistic model, and $\mathbb{E}$ is the expectation operator. The modelling assumptions and the problem formulation will be elaborated in the following sections.
\subsubsection{Dynamical Model}
The $\hat{g}$ term of~\eqref{eq:approximate_dynamics} is the estimated model of the unknown $g$ term of the original dynamics:
\begin{equation}
    \dot{\bar{p}} = f(\bar{p},\bar{u}) + \underbrace{g(\bar{p},\bar{u})}_\text{unknown},
    \label{eq:dynamics_original}
\end{equation}
where the state $\bar{p} \in \mathcal{X}$ is now considered deterministic, and the functions $f:\mathcal{X}\times \mathcal{U} \to \mathbb{R}^{n}$ and $g:\mathcal{X}\times\mathcal{U} \to \mathbb{R}^{n}$ are Lipschitz with respect to $\bar{p}$ and $\bar{u}$. The control set $\mathcal{U}$ is convex and compact.
\begin{rmrk}
The maximum entropy distribution with the known mean $\mu_g$ and covariance matrix $\Sigma_g$ of the random variable $\hat{g}$ is the Gaussian distribution $\mathcal{N}(\mu_g,\Sigma_g)$. 
\label{rmrk:gaussian_model}
\end{rmrk}
\noindent The learning algorithm computes the mean vector $\mu_{g}(\mu_x,\bar{u})$ and the covariance matrix $\Sigma_{g}(\mu_x,\bar{u})$ estimates of $g(x,\bar{u})$ that are functions of the mean $\mu_x$ of the state $x$ and control $\bar{u}$. 
Due to Remark~\ref{rmrk:gaussian_model}, the unknown bias term is modeled as a multivariate Gaussian distribution $\hat{g}(\mu_x,\bar{u}) \sim \mathcal{N}(\mu_g, \Sigma_g)$. The estimate $\hat{g}$ in (\ref{eq:approximate_dynamics}) can be expressed as
\begin{equation}\hat{g} = B(\mu_x,\bar{u}) \theta + \mu_g(\mu_x,\bar{u}), \label{eq:learned_model}\end{equation}where $\theta \sim \mathcal{N} (0,\mathrm{I})$ i.i.d and $B(\mu_x,\bar{u})B^\top(\mu_x,\bar{u}) = \Sigma_{g}(\mu_x,\bar{u})$. Using~\eqref{eq:learned_model}, (\ref{eq:dynamics_original}) can be written in standard Ito stochastic differential equation (SDE) form as given below, where $\theta dt = dW.$\begin{equation}
    dx = f(x,\bar{u}) dt + \mu_g(\mu_x,\bar{u}) dt + B(\mu_x,\bar{u}) dW
    \label{eq:dynamics_estimated}
\end{equation}
The existence and uniqueness of a solution to the SDE for a given initial distribution $x_0$ and control trajectory $\bar{u}$ such that $\mathrm{Pr}(|x(t_0)-x_0)| = 0) = 1$ with measure $\mathrm{Pr}$, is guaranteed by the \textit{Lipschitz condition} and \textit{Restriction on Growth condition}~\cite{arnold1974stochastic}.
The approximate system~(\ref{eq:dynamics_estimated}) is assumed to be controllable in the given feasible space.
\subsubsection{Cost Functional} The integrand cost functional $J= J_{\mathcal{C}} + J_{\mathcal{I}}$ includes two objectives: a) exploration cost $J_{\mathcal{I}}$ for achieving the maximum value of information for learning the unknown dynamics $g$, and b) performance cost $J_{\mathcal{C}}$ (e.g., minimizing the control effort). The integrand cost functional $J_{\mathcal{C}}$ for fuel optimality, which is convex in $\bar{u}$, is given as
\begin{equation}
    J_{\mathcal{C}} = \|\bar{u} \|_{s} \quad \text{where} \ \ s \in \{1,2,\infty\}
    \label{eq:control_cost}
\end{equation}
One example of $J_{\mathcal{I}}$  is the following variance-based information cost using each $i^{\mathrm{th}}$ diagonal element $\sigma^2_{i}$ in $\Sigma_g$. 
\begin{equation}
    J_{\mathcal{I}} = - \sum_{i=1}^{n}\sigma_{i}(\mu_x,\bar{u})    \label{eq:information_cost}
\end{equation}
The information cost $J_{\mathcal{I}}$ in~\eqref{eq:information_cost} is a functional of the mean $\mu_x$ of the state $x$ and control $\bar{u}$ at time $t$. By minimizing the cost $J_{\mathcal{I}}$, we maximize the information~\cite{srinivas2010gaussian} available in the trajectory $x$ to learn the unknown model $g$. The terminal cost functional $J_{f}$ is quadratic in the state $x$, $J_{f} = x(t_f)^\top Q_{f} x(t_f)$, where $Q_{f}$ is a positive semi-definite function.

\subsubsection{State and Safety Constraints} Safety is defined as a constraint on the state space $x$, $x(t) \in \mathcal{F}$ at time $t$. The safe set $\mathcal{F}$ is relaxed by formulating a joint chance constraint with risk of constraint violation as
\begin{equation}
    \mathrm{Pr}(x\in \mathcal{F}) \geq 1- \epsilon.
\end{equation}
The constant $\epsilon$ is called the risk measure of a chance constraint in this paper. We consider the polytopic constraint set $\mathcal{F}_\mathrm{lin} = \{x \in \mathcal{X}: \land_{i =1}^{k} a_{i}^\top x + b_{i} \leq 0\}$ with $k$ flat sides and a quadratic constraint set $\mathcal{F}_\mathrm{quad} = \{x \in \mathcal{X}: x^\top A x  \leq c\}$ for any realization $x$ of the state. The joint chance constraint $\mathrm{Pr}(\land_{i =1}^{k} a_{i}^\top x + b_{i} \leq 0) \geq 1- \epsilon$ can be transformed to the following individual chance constraint:
\begin{equation}
 \mathrm{Pr}(a_{i}^\top x + b_{i} \leq 0) \geq 1- \epsilon_{i},
 \label{eq:single_lin_cc}
\end{equation}
such that $\sum_{i = 1}^{k}\epsilon_{i} = \epsilon$. Here we use $\epsilon_{i} = \frac{\epsilon}{k}$. The individual risk measure $\epsilon_{i}$ can be optimally allocated between the $k$ constraints~\cite{ono2008iterative}. A quadratic chance constraint is given as:
\begin{equation}
    \mathrm{Pr}(x^\top A x \geq c) \leq \epsilon,
    \label{eq:quad_cc}
\end{equation}
where $A$ is a positive definite matrix. \textcolor{black}{We use a tractable conservative approximation of sets in~(\ref{eq:single_lin_cc},\ref{eq:quad_cc}) by formulating distributionally robust chance constraint~\cite{calafiore2006distributionally} for known mean $\mu_{x}$ and variance $\Sigma_{x}$ of the random variable $x$ at each time $t$.} 

\begin{lemma}
\label{lemma:lin_drcc}
The linear distributionally robust chance constraint $\inf_{x \sim (\mu_{x},\Sigma_{x})}\mathrm{Pr}(a^\top x + b \leq 0) \geq 1- \epsilon_{\ell}$ is equivalent to the deterministic constraint:
\begin{equation}
  a^\top \mu_{x} + b+ \sqrt{\frac{1-\epsilon_{\ell}}{\epsilon_{\ell}}}\sqrt{a^\top \Sigma_{x}a} \leq 0.
    \label{eq:lin_socc}
\end{equation}
\end{lemma}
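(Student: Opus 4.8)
The plan is to reduce the vector distributionally robust chance constraint to a scalar tail bound and then invoke the tight form of the one-sided Chebyshev (Cantelli) inequality. First I would define the scalar random variable $y = a^\top x + b$. Since $x$ has mean $\mu_x$ and covariance $\Sigma_x$, the variable $y$ has mean $\mu_y = a^\top \mu_x + b$ and variance $\sigma_y^2 = a^\top \Sigma_x a$. The crucial observation is that the family of distributions of $x$ with the prescribed first two moments maps \emph{onto} the family of scalar distributions of $y$ with moments $(\mu_y, \sigma_y^2)$: given any admissible scalar law of $y$, one can lift it to a vector law by writing $x = \mu_x + \tfrac{\Sigma_x a}{\sqrt{a^\top \Sigma_x a}}\, s + \zeta$, where $s$ is the standardized scalar variable and $\zeta$ is an independent zero-mean term with covariance $\Sigma_x - \tfrac{\Sigma_x a a^\top \Sigma_x}{a^\top \Sigma_x a}$ (positive semidefinite, with $a$ in its null space so $a^\top \zeta = 0$ a.s.). Hence the constraint $\inf_{x \sim (\mu_x,\Sigma_x)} \mathrm{Pr}(a^\top x + b \leq 0) \geq 1 - \epsilon_\ell$ is equivalent to $\sup_{y \sim (\mu_y,\sigma_y^2)} \mathrm{Pr}(y \geq 0) \leq \epsilon_\ell$.

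Next I would apply the one-sided Chebyshev inequality: for any random variable with mean $\mu_y$ and variance $\sigma_y^2$ and any $\lambda > 0$, one has $\mathrm{Pr}(y - \mu_y \geq \lambda) \leq \sigma_y^2/(\sigma_y^2 + \lambda^2)$. Taking $\lambda = -\mu_y$ (which presumes $\mu_y < 0$, the regime any feasible plan must satisfy) gives $\mathrm{Pr}(y \geq 0) \leq \sigma_y^2/(\sigma_y^2 + \mu_y^2)$. The essential point for an \emph{equivalence} rather than a merely sufficient condition is that this bound is attained over the moment-specified family: the two-point distribution supported on $\{\mu_y - \sigma_y^2/\lambda,\ \mu_y + \lambda\}$ with mass $p = \sigma_y^2/(\sigma_y^2 + \lambda^2)$ at the upper atom has the correct mean and variance and saturates Cantelli, so the worst-case probability equals exactly $\sigma_y^2/(\sigma_y^2 + \mu_y^2)$.

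Finally I would substitute the tight value into $\sup \mathrm{Pr}(y \geq 0) \leq \epsilon_\ell$ to obtain $\sigma_y^2/(\sigma_y^2 + \mu_y^2) \leq \epsilon_\ell$, and rearrange to $\mu_y^2 \geq \tfrac{1-\epsilon_\ell}{\epsilon_\ell}\,\sigma_y^2$. Using $\mu_y < 0$ to take the negative square root yields $\mu_y \leq -\sqrt{\tfrac{1-\epsilon_\ell}{\epsilon_\ell}}\,\sigma_y$, that is, $a^\top \mu_x + b + \sqrt{\tfrac{1-\epsilon_\ell}{\epsilon_\ell}}\sqrt{a^\top \Sigma_x a} \leq 0$, which is precisely the claimed deterministic constraint.

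The main obstacle I anticipate is the tightness argument in the second step: deducing the deterministic inequality as \emph{sufficient} is immediate from Cantelli, but establishing it as \emph{necessary} requires exhibiting the extremal two-point distribution that saturates the bound and verifying it belongs to the admissible moment class (together with the lifting in the first paragraph). I would also handle the degenerate and boundary cases carefully: when $\sigma_y^2 = 0$ the constraint collapses to $a^\top \mu_x + b \leq 0$, and for any $\epsilon_\ell \in (0,1)$ the sign requirement $\mu_y < 0$ is in fact \emph{forced} by the constraint rather than assumed, since $\sigma_y^2/(\sigma_y^2 + \mu_y^2) \leq \epsilon_\ell < 1$ cannot hold with $\mu_y \geq 0$.
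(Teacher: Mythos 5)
Your proof takes a genuinely different route from the paper, for the simple reason that the paper offers no proof at all: it disposes of Lemma~\ref{lemma:lin_drcc} by citing Theorem 3.1 of~\cite{calafiore2006distributionally}. What you have written is essentially a self-contained reconstruction of that cited result: the scalarization $y = a^\top x + b$; the lifting argument showing that the vector moment class $\{x \sim (\mu_x,\Sigma_x)\}$ maps \emph{onto} the scalar moment class $\{y \sim (\mu_y,\sigma_y^2)\}$ (your construction is sound --- $\mathrm{Var}(a^\top \zeta) = a^\top \Sigma_x a - (a^\top \Sigma_x a)^2/(a^\top \Sigma_x a) = 0$, so $a^\top \zeta = 0$ a.s., and the residual covariance is PSD); the Cantelli bound together with the two-point extremal law that saturates it; and the algebraic rearrangement under $\mu_y < 0$. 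You are right that the tightness half is the essential content: Cantelli alone gives only sufficiency of \eqref{eq:lin_socc}, and the extremal distribution is what upgrades it to the claimed equivalence --- a point worth making explicit, since the companion Lemma~\ref{lemma:quad_drcc} is, by contrast, only a conservative approximation. (One measure-theoretic hair: the complement of the constraint event is $\{y > 0\}$, not $\{y \geq 0\}$; the two have the same worst-case probability, seen by nudging the upper atom of the extremal law from $0$ to $\delta > 0$ and letting $\delta \to 0$.) What the paper's citation buys is brevity; what your derivation buys is self-containedness and a clear view of exactly where equivalence, rather than mere conservatism, comes from.

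One step of your argument is wrong as stated, although the conclusion it supports is true and the repair is short. You assert that $\mu_y < 0$ is forced by the constraint ``since $\sigma_y^2/(\sigma_y^2+\mu_y^2) \leq \epsilon_\ell < 1$ cannot hold with $\mu_y \geq 0$.'' As an algebraic claim this is false: whenever $\mu_y \geq \sqrt{(1-\epsilon_\ell)/\epsilon_\ell}\,\sigma_y > 0$, the ratio \emph{is} at most $\epsilon_\ell$. The underlying error is that for $\mu_y \geq 0$ the Cantelli ratio is no longer the worst-case probability at all --- your application of Cantelli required $\lambda = -\mu_y > 0$ --- so no conclusion can be drawn from that ratio in this regime. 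The correct argument is that for $\mu_y \geq 0$ and $\sigma_y > 0$ the worst case is catastrophic: the admissible two-point laws with mass $p$ at $\mu_y - \sigma_y\sqrt{(1-p)/p}$ and mass $1-p$ at $\mu_y + \sigma_y\sqrt{p/(1-p)}$ give $\mathrm{Pr}(y > 0) \geq 1 - p \to 1$ as $p \to 0$, so $\sup \mathrm{Pr}(y>0) = 1 > \epsilon_\ell$ and the chance constraint fails; the deterministic constraint \eqref{eq:lin_socc} fails there too, since $\mu_y + \sqrt{(1-\epsilon_\ell)/\epsilon_\ell}\,\sigma_y > 0$. Hence both sides of the equivalence are violated on $\{\mu_y \geq 0,\ \sigma_y > 0\}$, and the equivalence survives. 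With that one-line fix, together with your (correct) treatment of $\sigma_y = 0$, the proof is complete.
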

\begin{proof}
 See Theorem 3.1 in~\cite{calafiore2006distributionally}.
\end{proof}
\begin{lemma}
\label{lemma:quad_drcc}
The semi-definite constraint on the variance $\Sigma_x$ \begin{equation}
     \mathrm{tr}(A\Sigma_x) \leq \epsilon_q c
     \label{eq:quad_sdp}
\end{equation}  
is a conservative deterministic approximation of the quadratic chance-constraint $\scalebox{0.95}{$\mathrm{Pr}( (x- \mu_{x})^\top A (x- \mu_{x}) \geq c)$} \leq \epsilon_{q}$.
\end{lemma}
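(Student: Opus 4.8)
The plan is to reduce the quadratic chance constraint to a single application of Markov's inequality on a scalar nonnegative random variable, and then to identify its expectation with the trace term in~\eqref{eq:quad_sdp}. Define the scalar random variable $Y = (x-\mu_x)^\top A (x-\mu_x)$. Since $A$ is positive definite, $Y \geq 0$ almost surely, and $c>0$ (it is the level that bounds the quadratic form in the safe set $\mathcal{F}_\mathrm{quad}$), so $Y$ is a nonnegative random variable to which Markov's inequality applies. The chance constraint $\mathrm{Pr}((x-\mu_x)^\top A(x-\mu_x) \geq c) \leq \epsilon_q$ is then exactly $\mathrm{Pr}(Y \geq c) \leq \epsilon_q$.

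The first key step is to bound the tail: by Markov's inequality, $\mathrm{Pr}(Y \geq c) \leq \mathbb{E}[Y]/c$. The second key step, the only genuine computation, is to evaluate $\mathbb{E}[Y]$ in terms of $\Sigma_x$. Using the scalar identity $(x-\mu_x)^\top A(x-\mu_x) = \mathrm{tr}\big(A (x-\mu_x)(x-\mu_x)^\top\big)$ together with the cyclic property of the trace and linearity of expectation, I would write $\mathbb{E}[Y] = \mathrm{tr}\big(A \, \mathbb{E}[(x-\mu_x)(x-\mu_x)^\top]\big) = \mathrm{tr}(A\Sigma_x)$, where the last equality is just the definition $\Sigma_x = \mathbb{E}[(x-\mu_x)(x-\mu_x)^\top]$. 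Combining the two steps gives $\mathrm{Pr}(Y \geq c) \leq \mathrm{tr}(A\Sigma_x)/c$.

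Finally, I would close by substitution: if the semidefinite constraint~\eqref{eq:quad_sdp}, namely $\mathrm{tr}(A\Sigma_x) \leq \epsilon_q c$, holds, then $\mathrm{Pr}(Y \geq c) \leq \epsilon_q$, which is exactly the quadratic chance constraint. I expect the main subtlety to lie not in the calculation but in articulating the two qualifiers in the statement. The term \emph{conservative} is justified by observing that the implication runs one way only: Markov's inequality is generically loose, so~\eqref{eq:quad_sdp} carves out a subset of the true feasible set rather than matching it. The term \emph{distributionally robust} follows because the bound depends on the law of $x$ only through its mean $\mu_x$ and covariance $\Sigma_x$ (Markov's step uses nothing else), so the guarantee holds uniformly over all distributions sharing these first two moments. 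Because the quadratic form is centered at $\mu_x$, no cross term between $\mu_x$ and $A$ survives; the only hypotheses I must track carefully are $A \succ 0$ and $c>0$, which together ensure both nonnegativity of $Y$ and the validity of the Markov bound.
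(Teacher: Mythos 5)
Your proof is correct and is essentially the argument behind the result: the paper itself only cites Proposition~1 of~\cite{nakka2019nsoc}, which establishes the claim exactly this way, by applying Markov's inequality to the nonnegative scalar $Y=(x-\mu_x)^\top A(x-\mu_x)$ and evaluating $\mathbb{E}[Y]=\mathrm{tr}(A\Sigma_x)$ via the trace identity. Indeed, the intermediate bound $\mathrm{Pr}(Y \geq c) \leq \mathbb{E}[Y]/c \leq \epsilon_q$ appears verbatim in the statement of Theorem~\ref{thm:quad_chance_constraint}, confirming that your Markov-plus-trace route is the one the paper relies on.
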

\begin{proof}
See Proposition 1 in~\cite{nakka2019nsoc}. 
\end{proof}
\noindent The risk measures $\epsilon_{\ell}$ and $\epsilon_q$ are assumed to be given. For the rest of the paper, we consider the following individual chance-constrained problem,
\begin{equation}
\begin{aligned}
(x^{*},\bar{u}^{*})= & \underset{x(t),\bar{u}(t)}{\textrm{argmin}}\scalebox{0.95}{$\quad\mathbb{E}  \left[\int_{t_{0}}^{t_{f}} ((1-\rho)J_{\mathcal{C}} + \rho J_{\mathcal{I}})dt + J_f\right]$}, \\
\text{s.t.}& \{(\ref{eq:dynamics_estimated}),~(\ref{eq:lin_socc}),~(\ref{eq:quad_sdp}),~(\ref{eq:control_bound}),~(\ref{eq:init_terminal})\}
\end{aligned} \label{eq:stoptprob_sicc}
\end{equation}
that is assumed to have a feasible solution with $\rho \in [0,1]$. 
\vspace{-5pt}
\subsection{Generalized Polynomial Chaos (gPC)}
\label{subsec:gpc}
The problem~(\ref{eq:stoptprob_sicc}) is projected into a finite-dimensional space using the generalized polynomial chaos algorithm presented in~\cite{nakka2019nsoc}. We briefly review the ideas and equations. We refer the readers to \cite{nakka2019nsoc} for details on computing the functions and matrices that are mentioned below. In the gPC expansion, any $\mathcal{L}_{2}$-bounded random variable $x(t)$ is expressed as product of the basis functions matrix $\Phi(\theta)$ defined on the random variable $\theta$ and the deterministic time varying coefficients $X(t) = \begin{bmatrix}x_{10} &\cdots &x_{1\ell} & \cdots & x_{n0} & \cdots& x_{n\ell}\end{bmatrix}^\top$. The functions $\phi_{i}$, $i \in \{0,1,\dots,\ell\}$ are chosen to be Gauss-Hermite polynomials in this paper. Let 
$\Phi(\theta) = \begin{bmatrix} \phi_{0}(\theta) & \cdots & \phi_{\ell}(\theta)\end{bmatrix}^\top$. 
\begin{equation}
    x \approx \Bar{\Phi} X ;\ \text{where}\ \Bar{\Phi} = \mathbb{I}_{n\times n} \otimes \Phi(\theta)^\top
    \label{eq:state_kronck_notation}
\end{equation}
A Galerkin projection is used to transform the SDE~(\ref{eq:dynamics_estimated}) to the deterministic equation, \begin{equation}
    \dot{X} = \Bar{f}(X,\bar{u}) + \Bar{\mu}_{g}(\mu_x,\bar{u}) + \Bar{B}(\mu_x,\bar{u}).
    \label{eq:odefull_form}
\end{equation}
The exact form of the function $\bar{f}$ is given in~\cite{nakka2019nsoc}. The moments of the random variable $x$, $\mu_x$ and $\Sigma_x$ can be expressed as polynomial functions of elements of $X$. The polynomial functions defined in~\cite{nakka2019nsoc} are used to project the distributionally robust linear chance constraint in~(\ref{eq:lin_socc}) to a second-order cone constraint in $X$ as follows:
\begin{equation}
    (a^\top \otimes S)X + b + \sqrt{\frac{1-\epsilon_{\ell}}{\epsilon_{\ell}}}\sqrt{X^\top U N N^\top U^\top X} \leq 0,
    \label{eq:socc_gpc}
\end{equation}
where the matrices $S$, $U$, and $N$ are functions of the expectation of polynomials $\Phi$, that are given in~\cite{nakka2019nsoc}. The quadratic chance constraint in~(\ref{eq:quad_sdp}) is transformed to the following quadratic constraint in $X$:
\begin{equation}
    \sum_{i = 1}^{n} \sum_{k = 1}^{\ell} a_{i}\langle \phi_{k},\phi_{k}\rangle x_{ik}^2 \leq \epsilon_{q} c,
    \label{eq:sdp_gpc}
\end{equation}
where $A$ is a diagonal matrix with $a_{i}$ being the $i^\mathrm{th}$ diagonal element, and $\langle.,.\rangle$ denotes an inner product operation. Similarly, we project the initial and terminal state constraints $X(t_0)$ and $X(t_{f})$. In Sec.~\ref{sec:computation}, we present the Info-SNOC algorithm by using the projected optimal control problem.
\vspace{-2pt}
\section{Info-SNOC Main Algorithm}
\label{sec:computation}
We present the main algorithm for the architecture shown in Fig.~\ref{fig:work-flow} that integrates the learning method and the Info-SNOC method with rollout. We discuss an iterative solution method to Info-SNOC by projecting~\eqref{eq:stoptprob_sicc} to the gPC space. We formulate a deterministic optimal control problem in the gPC space and solve it using SCP~\cite{nakka2019nsoc,morgan2014model,morgan2016swarm} method. The gPC projection of~\eqref{eq:stoptprob_sicc} is given by the following equation:
\begin{equation}
\begin{aligned}
(X^{*},\bar{u}^{*})= & \underset{X(t),\bar{u}(t)}{\text{argmin}}\scalebox{0.9}{$\quad  \left[\int_{t_{0}}^{t_{f}} ((1-\rho)J_{\mathcal{C}} + \rho J_{\mathcal{I}})dt + \mathbb{E}{J}_{gPC_f}\right]$} \\
\text{s.t.}&~ \{(\ref{eq:odefull_form}),~(\ref{eq:socc_gpc}),~(\ref{eq:sdp_gpc}),~(\ref{eq:control_bound})\}\\
& X(t_0)= X_0,\quad X(t_f)= X_f.
\end{aligned} \label{eq:stoptprob_gpc}
\end{equation}
In SCP, the projected dynamic model~(\ref{eq:odefull_form}) is linearized about a feasible nominal trajectory and discretized to formulate a linear equality constraint. Note that the constraints~(\ref{eq:socc_gpc}) and~(\ref{eq:sdp_gpc}) are already convex in the states $X$. The terminal constraint $J_{f}$ is projected to the quadratic constraint $J_{gPC_f} = X_{f}^\top \Bar{\Phi}^\top Q_{f} \Bar{\Phi} X_{f}$. The information cost functional $J_{\mathcal{I}}$ from (\ref{eq:information_cost}) is expressed as a function of $X$ by using the polynomial representation~\cite{nakka2019nsoc} of $\mu_x$ in terms of $X$. Let $S= (X,\bar{u})^\top$ and the cost $J_{\mathcal{I}}$ is linearized around a feasible nominal trajectory $S^{o} = (X^{o},\bar{u}^{o})^\top$ to derive a linear convex cost functional $J_{d\mathcal{I}}$:
\begin{align}
   J_{d \mathcal{I}} = -\sum_{i= 1}^{n} \Big(\sigma_{i}(S^{o}) +\tfrac{\partial \sigma_{i}}{\partial S}\Big|_{S^{o}} (S- S^{o})\Big).\label{eq:linear_information_cost}
\end{align}
We use the convex approximation $J_{d\mathcal{I}}$ as the information cost in the SCP formulation of the optimal control problem in~\eqref{eq:stoptprob_gpc}. In the gPC space, we split the problem into two cases: a) $\rho = 0$ that computes a performance trajectory, and b) $\rho \in (0,1]$ that computes information trajectory to have stable iterations. The main algorithm is outlined below.

\subsubsection*{Algorithm}\textcolor{black}{We use an initial estimate of the model~\eqref{eq:learned_model} learned from data generated by a known safe control policy, and a \emph{nominal trajectory} $(x^{o},\bar{u}^{o})$ computed using deterministic SCP~\cite{morgan2014model} with nominal model to initialize Algorithm~\ref{Algo:inf_scp}. The stochastic model and the chance constraints are projected to the gPC state space, which is in line~2 of Algorithm~\ref{Algo:inf_scp}. The projected dynamics is linearized around the nominal trajectory and used as a constraint in the SCP. The projection step is only needed in the first epoch. The projected system can be directly used for $\text{epoch}>1$. The current estimated model is used to solve (\ref{eq:stoptprob_gpc}) using SCP, in line~7 with $\rho$ = 0, for a performance trajectory $(x_p,\bar{u}_p)$. The output $(x_p,\bar{u}_p)$ of this optimization is used as initialization to the Info-SNOC problem obtained by setting $\rho \in (0,1]$ to compute the information trajectory $(x_i,\bar{u}_i)$. The trajectory $(x_i,\bar{u}_i)$ is then sampled for a safe motion plan $(\bar{p}_d,\bar{u}_d)$ in line~9, that is used for rollout, in line~10, to collect more data for learning. The SCP step is performed in the gPC space $X$. After each SCP step, the gPC space coordinates $X$ are projected back to the random variable $x$ space. The Info-SNOC problem outputs a trajectory of random variable $x$ with finite variance at each epoch.}
\vspace{-3pt}
\begin{algorithm}
\caption{Info-SNOC using SCP~\cite{morgan2016swarm} and gPC~\cite{nakka2019nsoc}}
\label{Algo:inf_scp}
\begin{algorithmic}[1]
\STATE Initial Safe Set Data, Feasible Nominal Trajectory$(x^{o},\bar{u}^{o})$
\STATE gPC Projection as discussed in  Sec.~\ref{subsec:gpc}
\STATE Linearize the gPC cost and dynamics, see~\cite{nakka2019nsoc}
\STATE epoch = 1
\WHILE{Learning Criteria Not Satisfied}
    \STATE Learn $g\sim\mathcal{N}(\mu_g,\Sigma_g)$ using Robust Regression 
    \STATE $(x_{p},\bar{u}_{p})$ = SCP$\left((x^{o},\bar{u}^{o}), \rho =0\right)$, using~\eqref{eq:stoptprob_gpc}
    \STATE $(x_{i},\bar{u}_{i})$  = SCP$\left((x_{p},\bar{u}_{p}), \rho \in (0,1]\right)$, using~\eqref{eq:stoptprob_gpc}
    \STATE Sample $(x_{i},\bar{u}_{i})$ for $(\bar{p}_d,\bar{u}_{d})$
    \STATE Rollout using sample $(\bar{p}_{d},\bar{u}_{d})$ and $u_c$
    \STATE Data collection during rollout
    \STATE epoch $\gets$ epoch + 1
    \ENDWHILE
\end{algorithmic}
\end{algorithm}
\vspace{-6pt}
\subsubsection*{Convergence and Optimality} The information trajectory $(x_i,\bar{u}_i)$ computed using SCP with the approximate linear information cost $J_{d\mathcal{I}}$~(\ref{eq:linear_information_cost}) is a sub-optimal solution of~\eqref{eq:stoptprob_gpc} with the optimal cost value $J^*_{d\mathcal{I}}$. Therefore, the optimal cost of~\eqref{eq:stoptprob_gpc} given by $J^{*}_{\mathcal{I}}$ is bounded above by $J^*_{d\mathcal{I}}$, $J^{*}_{\mathcal{I}} \leq J^*_{d\mathcal{I}}$. For the Info-SNOC algorithm, we cannot guarantee the convergence of SCP iterations to a Karush-Kuhn-Tucker point using the method in~\cite{morgan2016swarm,dinh2010local} due to the non-convexity of $J_{\mathcal{I}}$. Due to the non-convex cost function $J_{\mathcal{I}}$, the linear approximation $J_{d\mathcal{I}}$ of the cost $J_{\mathcal{I}}$ can potentially lead to numerical instability in SCP iterations. Finding an initial performance trajectory, $\rho = 0$, and then optimizing for information, $\rho \in (0,1]$, is observed to be numerically stable compared to directly optimizing for the original cost functional $J = J_{\mathcal{C}}+ J_{\mathcal{I}}$.

\subsubsection*{Feasibility} The initial phases of learning might lead to a large covariance $\Sigma_{g}$ due to the insufficient data, resulting in an infeasible optimal control problem. To overcome this, we use two strategies: 1) Explore the initial safe set till we find a feasible solution to the problem, and 2) Use slack variables on the terminal condition to approximately reach the goal accounting for a large variance. 

\subsection{Rollout Policy Implementation}
\label{subsec:exploration}The information trajectory $(x_i,\bar{u}_i)$ computed using the Info-SNOC algorithm is sampled for a pool of motion plans $(\bar{p}_d,\bar{u}_d)$. The trajectory pool is computed by randomly sampling the multivariate Gaussian distribution $\theta$ and transforming it using the gPC expansion $x(\theta) \approx \bar{\Phi}(\theta)X$. For any realization $\bar{\theta}$ of $\theta$, we get a deterministic trajectory $\bar{p}_{d} = \bar{\Phi}(\bar{\theta})X$ that is $\epsilon$ safe with respect to the distributionally robust chance constraints. The trajectory $(\bar{p}_d,\bar{u}_d)$ is executed using the closed-loop control law $u_{c} = u_{c}(\bar{p},\bar{p}_d,\bar{u}_d)$ for rollout, where $\bar{p}$ is the current state. To ensure real-time safety during the initial stages of exploration, a safe control policy $u_{s}$ is computed using the control barrier function-based safety filter. The properties of the control law $u_c$ and the safety during rollout are studied in the following section.

\vspace{-2pt}
\section{Analysis}
\label{sec:analysis} In this section, we present the main theoretical results analyzing the following two questions: 1) at any epoch $i$ how do learning errors translate to safety violation bounds during rollout, and 2) under what assumptions is the multivariate robust regression a consistent learning method as epoch $\to \infty$. The analysis proves that if the Info-SNOC algorithm computes a motion plan with finite variance, then learning is consistent and implies safety during rollout.  
\begin{asmp}
\label{asmp:sufficient_gpc}
The projected problem~(\ref{eq:stoptprob_gpc}) computes a feasible trajectory to the original problem~(\ref{eq:stoptprob_sicc}). The assumption is generally true if we choose a sufficient number of polynomials~\cite{nakka2019nsoc}, for the projection operation.
\end{asmp}
\begin{asmp}
\label{asmp:learning_bounds}
The probabilistic inequality $\mathrm{Pr}(\|g(\bar{p},\bar{u})-\mu_{g}(\bar{p},\bar{u})\|^2 _2 \leq c_1) \geq 1- \epsilon_{\mathrm{\ell b}}$ holds, where $\epsilon_{\mathrm{\ell b}}$ is small, for the same input $(\bar{p},\bar{u})$ to the original model $g$, and the mean $\mu_g$ of the learned model. Using this inequality, we can say that the following bounds hold with high probability:
\begin{equation}
    \begin{aligned}
     \|g(\bar{p},\bar{u})-\mu_{g}(\bar{p},\bar{u})\|^2 _2 \leq c_1, \mathrm{tr}\left(\Sigma_g \right) \leq c_2, \end{aligned}\label{eq:learning_bounds}
\end{equation}
\end{asmp}
\noindent where $c_2 = c_1 \epsilon_{\mathrm{\ell b}}$. As shown in~\cite{liu2019robust}, the mean predictions made by the model learned using robust regression is bounded by $c_1$, which depends on the choice of the function class for learning. The variance prediction $\Sigma_g$ is bounded by design. With Assumptions~\ref{asmp:sufficient_gpc} and~\ref{asmp:learning_bounds}, the analysis is decomposed into the following three subsections.
\vspace{-7pt}
\subsection{State Error Bounds During Rollout}
\label{subsec:state_bounds}
The following assumptions are made on the nominal system $\dot{\bar{p}}=f(\bar{p},u)$ to derive the state tracking error bound during rollout.
\begin{asmp}
\label{asmp:stability}
There exists a globally exponentially stable (i.e., finite-gain $\mathcal{L}_p$ stable) tracking control law $u_c = u_c(\bar{p},\bar{p}_{d},\bar{u}_{d})$ for the nominal dynamics $\dot{\bar{p}}= f(\bar{p},u_c)$. The control law $u_c$ satisfies the property $u_{c}(\bar{p}_d,\bar{p}_d,\bar{u}_d) = \bar{u}_d$ for any sampled trajectory $(\bar{p}_{d},\bar{u}_{d})$ from the information trajectory $(x_{i},\bar{u}_{i})$. At any time $t$ the state $\bar{p}$ satisfies the following inequality, when the closed-loop control $u_c$ is applied to the nominal dynamics, 
\begin{equation*}
\scalebox{0.95}{$M(\bar{p},t)\frac{\partial f}{\partial \bar{p}} + \left(\frac{\partial f}{\partial \bar{p}}\right)^\top M(\bar{p},t) + \frac{d}{dt}M(\bar{p},t) \leq - 2 \alpha M(\bar{p},t)$},
\end{equation*}
where $f = f(\bar{p},u_c(\bar{p},\bar{p}_d,\bar{u}_d))$, $\alpha>0$, $M(\bar{p},t)$ is a uniformly positive definite matrix with $(\lambda_{\min}(M)\|\bar{p}\|^2 \leq \bar{p}^\top M(\bar{p},t)\bar{p} \leq \lambda_{\max}(M)\|\bar{p}\|^2)$, and $\lambda_{\max}$ and $\lambda_{\min}$ are the maximum and minimum eigenvalues.
\end{asmp}
\begin{asmp}\label{asmp:bounded_g}
The unknown model $g$ satisfies the bound $\|\left(g\left(\bar{p},u_c\right) - g(\bar{p}_d,\bar{u}_d)\right)\|^2_2 \leq c_3$. 
\end{asmp}
\begin{asmp}\label{asmp:bounded_density}
The probability density ratio $\tfrac{\rho_{x_i(t)}}{\rho_{x_i(0)}} \leq r$ is bounded, where the functions $\rho_{x_i(0)}$ and $\rho_{x_i(t)}$ are the probability density functions of $x_i$ at time $t=0$ and $t$ respectively.
\end{asmp}
\begin{lemma}\label{lemma:bound}
Given that the estimated model~\eqref{eq:learned_model} satisfies the Assumption~\ref{asmp:learning_bounds}, and the systems~(\ref{eq:dynamics_original}) and~\eqref{eq:dynamics_estimated} satisfy Assumptions~\ref{asmp:stability},~\ref{asmp:bounded_g},~\ref{asmp:bounded_density}, if the control $u_{c} = u_{c}(\bar{p},x_i,\bar{u}_i)$ is applied to the system~\eqref{eq:dynamics_original}, then the following condition holds at time $t$
\begin{align}
    \mathbb{E}_{x_{i}(t)} (\|\bar{p}-x_i\|^2_{2}) & \leq  \tfrac{\lambda_{\max}(M)}{2\lambda_{\min}(M) \alpha_{m}} (c_1 + c_2 + c_3) r \label{eq:propagation_error}\\&
    +  \tfrac{\lambda_{\max}(M)r}{\lambda_{\min}(M)} \mathbb{E} \left( \|\bar{p}(0)-x_i(0)\|^2\right) e^{-2\alpha_{m}t}, \nonumber
\end{align}
where $(x_i,\bar{u}_i)$ is computed from~\eqref{eq:stoptprob_gpc} and $\alpha_{m} = (\alpha - 1)$. The states $\bar{p}\in \mathcal{X}$, and $x_i\in \mathcal{X}$ are feasible trajectories of the deterministic dynamics~(\ref{eq:dynamics_original}) and the SDE~(\ref{eq:dynamics_estimated}) for the initial conditions $\bar{p}(0) \in \mathcal{X}$ and $x_i(0)  \in \mathcal{X}$ respectively at $t\geq t_{0}$.
\end{lemma}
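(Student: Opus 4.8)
The plan is to build an incremental Lyapunov function from the contraction metric and propagate it along the stochastic error dynamics using It\^o calculus. Writing $e = \bar{p}-x_i$, I would set $V(e,\bar{p},t) = e^\top M(\bar{p},t)\,e$, which by the uniform definiteness in Assumption~\ref{asmp:stability} satisfies $\lambda_{\min}(M)\|e\|^2 \le V \le \lambda_{\max}(M)\|e\|^2$; these two inequalities are exactly what will produce the $\lambda_{\max}(M)/\lambda_{\min}(M)$ prefactors in the final bound. Because $x_i$ obeys the SDE~\eqref{eq:dynamics_estimated} while $\bar{p}$ obeys the deterministic field~\eqref{eq:dynamics_original} driven by $u_c$, the error $e$ is itself an It\^o process whose drift I would split into a \emph{contraction part} $f(\bar{p},u_c(\bar{p},x_i,\bar{u}_i))-f(x_i,\bar{u}_i)$ and a \emph{disturbance part} $d = g(\bar{p},u_c)-\mu_g$. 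The identity $u_c(\bar{p}_d,\bar{p}_d,\bar{u}_d)=\bar{u}_d$ in Assumption~\ref{asmp:stability} is what lets me write $f(x_i,\bar{u}_i)=f(x_i,u_c(x_i,x_i,\bar{u}_i))$, so that the contraction part is a genuine difference of the closed-loop field evaluated at two states.

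Next I would apply It\^o's lemma to $V$, whose generator produces three groups of terms. First, $2e^\top M\,[f(\bar{p},u_c)-f(x_i,\bar{u}_i)]$ together with the material derivative $\tfrac{d}{dt}M(\bar{p},t)$ is bounded by $-2\alpha V$ through a path-integral (virtual-displacement) argument that lifts the infinitesimal contraction inequality of Assumption~\ref{asmp:stability} to the finite difference $e$. Second, the cross term $2e^\top M d$ is handled by Young's inequality, which trades one unit of contraction rate for a clean $\|d\|^2$ penalty — this is precisely where $\alpha$ degrades to $\alpha_m=\alpha-1$ — and the disturbance magnitude is controlled by the split $d = [g(\bar{p},u_c)-g(x_i,\bar{u}_i)] + [g(x_i,\bar{u}_i)-\mu_g]$, invoking Assumption~\ref{asmp:bounded_g} for the first bracket ($c_3$) and Assumption~\ref{asmp:learning_bounds} for the second ($c_1$). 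Third, the It\^o second-order term contributes $\mathrm{tr}(B^\top M B)=\mathrm{tr}(M\Sigma_g)\le \lambda_{\max}(M)\,\mathrm{tr}(\Sigma_g)\le \lambda_{\max}(M)c_2$, which is the source of the $c_2$ contribution. Collecting these yields the drift inequality $\mathcal{L}V \le -2\alpha_m V + \lambda_{\max}(M)(c_1+c_2+c_3)$ for the generator $\mathcal{L}$.

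From here I would take expectations and apply the comparison lemma to the scalar bound $\tfrac{d}{dt}\mathbb{E}V \le -2\alpha_m \mathbb{E}V + \lambda_{\max}(M)(c_1+c_2+c_3)$, integrating the forcing against $e^{-2\alpha_m(t-s)}$ so the steady-state term collects $\int_0^t e^{-2\alpha_m(t-s)}\,ds \le 1/(2\alpha_m)$ while the homogeneous term carries $e^{-2\alpha_m t}$. Dividing by $\lambda_{\min}(M)$ to return from $V$ to $\|e\|^2$ reproduces the stated structure up to the factor $r$. The final step is a change of measure: the Lyapunov propagation controls the expectation under the initial law, whereas the claim is stated under $\mathbb{E}_{x_i(t)}$, so I would bound the Radon--Nikodym derivative $\rho_{x_i(t)}/\rho_{x_i(0)}\le r$ from Assumption~\ref{asmp:bounded_density} to pass between the two, which is what multiplies both terms by $r$.

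I expect the main obstacle to be the interface between the \emph{infinitesimal} contraction condition of Assumption~\ref{asmp:stability}, stated on $\partial f/\partial\bar{p}$, and the \emph{finite} error $e$ appearing inside the stochastic Lyapunov computation: making the path-integral contraction estimate rigorous while simultaneously accounting for the state dependence of $M(\bar{p},t)$ under It\^o differentiation (noting that $\bar{p}$ carries no Brownian part, so all quadratic variation enters through $x_i$) is the delicate point. A secondary subtlety is bookkeeping the constants so that the three disturbance sources combine exactly as $c_1+c_2+c_3$, which hinges on the particular Young's-inequality split that fixes $\alpha_m=\alpha-1$, and on carefully justifying the density-ratio change of measure that introduces $r$.
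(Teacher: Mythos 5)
Your proposal is correct and is essentially the same argument the paper relies on: the paper's entire proof of Lemma~\ref{lemma:bound} is a citation to Lemma~2 of~\cite{dani2014observer}, and that cited lemma is proved by exactly the stochastic-contraction computation you outline (incremental Lyapunov function $e^\top M(\bar{p},t)\,e$, It\^o generator with the trace term supplying $c_2$, a Young's-inequality trade that degrades $\alpha$ to $\alpha_m=\alpha-1$, the splitting of the disturbance into the $c_1$ and $c_3$ pieces, a comparison-lemma integration, and the density-ratio change of measure that introduces $r$). Since you have reconstructed the cited result rather than taken a different route, there is nothing to reconcile beyond the constant bookkeeping you already flag.
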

\begin{proof} See Lemma 2 in~\cite{dani2014observer}.\end{proof}
Lemma~\ref{lemma:bound} states that the expected mean squared error $\mathbb{E}(\|\bar{p}-x_i\|^2)$ is bounded by $\tfrac{\lambda_{\max}(M)(c_1 + c_2 + c_3)r}{2\alpha_{m}\lambda_{\min}(M)}$ as $t \to \infty$ when the control law $u_c$ is applied to the dynamics in~\eqref{eq:dynamics_original}. The bounded tracking performance leads to constraint violation, which is studied in the next section. 
\vspace{-7pt}
\subsection{Safety Bounds}
The safety of the original system~\eqref{eq:dynamics_original} for the linear and quadratic chance constraints during rollout with a controller $u_{c}$ discussed in Sec.~\ref{subsec:state_bounds} is analyzed in Theorems~\ref{thm:quad_chance_constraint} and~\ref{thm:lin_chance_constraint}.
\begin{theorem}
\label{thm:quad_chance_constraint} 
Given a feasible solution $(x,\bar{u}_{x})$ of (\ref{eq:stoptprob_sicc}), with the quadratic chance constraint $\mathrm{Pr}((x-\mu_{x})^\top A (x-\mu_{x}) \geq c) \leq \frac{\mathbb{E}((x-\mu_{x})^\top A (x-\mu_{x}))}{c} \leq \epsilon_q$, the trajectory $\bar{p}$ of the deterministic dynamics~\eqref{eq:dynamics_original} satisfies the following inequality at any time $t$:
\begin{align}
    &(\bar{p} - \mu_{x})^\top A (\bar{p}-\mu_{x}) \leq  \lambda_{\max}(A) \mathbb{E}_{x} \left( \|\bar{p} - x\|^2_{2}\right),  
    \label{eq:quad_violation}\end{align}
where $\mathbb{E} \left( \|\bar{p} - x\|^2_{2}\right)$ is bounded as defined in Lemma~\ref{lemma:bound}.
\end{theorem}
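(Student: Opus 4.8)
The plan is to relate the deterministic deviation $\bar{p} - \mu_{x}$ to the random tracking error $\bar{p} - x$, whose expected square is already controlled by Lemma~\ref{lemma:bound}, and then to convert the quadratic form into a Euclidean norm using the spectral bound on $A$. The starting observation is that $\bar{p}$ is the deterministic trajectory of~\eqref{eq:dynamics_original}, whereas $\mu_{x} = \mathbb{E}_{x}(x)$ is the mean of the random state of the SDE~\eqref{eq:dynamics_estimated}. Since $\bar{p}$ is deterministic, I can pull it inside the expectation and write $\bar{p} - \mu_{x} = \mathbb{E}_{x}(\bar{p} - x)$; that is, the left-hand side of~\eqref{eq:quad_violation} is a quadratic form evaluated at the \emph{expectation} of the random vector $\bar{p} - x$.

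First I would invoke convexity. Because $A$ is positive definite, the map $z \mapsto z^\top A z$ is convex, so Jensen's inequality applied to the random vector $\bar{p} - x$ gives
\[
(\bar{p} - \mu_{x})^\top A (\bar{p} - \mu_{x}) = \mathbb{E}_{x}(\bar{p}-x)^\top A\, \mathbb{E}_{x}(\bar{p}-x) \leq \mathbb{E}_{x}\!\left((\bar{p}-x)^\top A (\bar{p}-x)\right).
\]
This step moves the expectation outside the quadratic form with the inequality pointing in the direction needed for an upper bound.

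Next I would apply the Rayleigh-quotient bound $z^\top A z \leq \lambda_{\max}(A)\|z\|_2^2$ pointwise to the random integrand $z = \bar{p} - x$ and take expectations, obtaining $\mathbb{E}_{x}\!\left((\bar{p}-x)^\top A(\bar{p}-x)\right) \leq \lambda_{\max}(A)\,\mathbb{E}_{x}(\|\bar{p}-x\|_2^2)$. Chaining this with the Jensen step yields exactly~\eqref{eq:quad_violation}, and the right-hand side $\mathbb{E}_{x}(\|\bar{p}-x\|_2^2)$ is precisely the quantity bounded in Lemma~\ref{lemma:bound}, which closes the argument and produces the explicit decaying bound as $t \to \infty$.

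The calculation itself is short, so the main obstacle is conceptual rather than computational: the Jensen step must be justified carefully, requiring both that $\mu_{x}$ genuinely be the mean of $x$ under the same measure defining $\mathbb{E}_{x}$ and that the convexity direction match the desired inequality. I would also confirm that $\bar{p}$ and $x$ are evaluated at the same time $t$ and live on a common probability space, so that $\bar{p} - x$ is a well-defined random vector with mean $\bar{p} - \mu_{x}$; this is where the feasibility hypothesis on $(x,\bar{u}_{x})$ and Assumption~\ref{asmp:bounded_density} implicitly enter to make Lemma~\ref{lemma:bound} applicable.
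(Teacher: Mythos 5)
Your proof is correct and takes essentially the same route as the paper: both arguments reduce the claim to the intermediate inequality $(\bar{p}-\mu_{x})^\top A(\bar{p}-\mu_{x}) \leq \mathbb{E}_{x}\left((\bar{p}-x)^\top A(\bar{p}-x)\right)$ and then apply the spectral bound $z^\top A z \leq \lambda_{\max}(A)\|z\|_2^2$ together with Lemma~\ref{lemma:bound}. The only difference is one of packaging: you get the intermediate inequality from Jensen's inequality for the convex quadratic form, whereas the paper expands $\bar{p}-\mu_{x} = (\bar{p}-x)+(x-\mu_{x})$, computes the cross term $\mathbb{E}\left((\bar{p}-x)^\top A(x-\mu_{x})\right) = -\mathbb{E}\left((x-\mu_{x})^\top A(x-\mu_{x})\right)$, and drops the resulting nonnegative variance term --- which is exactly the slack in your Jensen step, so the two derivations are the same calculation in different clothing.
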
 \begin{proof}
Consider the expectation of the ellipsoidal set $(\bar{p} - \mu_{x})^\top A (\bar{p} - \mu_{x})$. Using $\bar{p} - \mu_{x} = \bar{p} - x + x- \mu_{x}$, the expectation of the set can be expressed as follows
\begin{align}
    &\mathbb{E}\left((\bar{p} - \mu_{x})^\top A (\bar{p} - \mu_{x}) \right) =  \mathbb{E}\left((\bar{p} - x)^\top A (\bar{p} - x)\right) \label{eq:thm1_eq1}\\
    + &\mathbb{E}\left((x - \mu_{x})^\top A (x - \mu_{x})\right) + 2\mathbb{E}\left((\bar{p} - x)^\top A (x - \mu_{x}) \right). \nonumber
\end{align}
Using the following equality: 
\begin{align*}
\mathbb{E}\left((\bar{p} - x)^\top A (x - \mu_{x}) \right) = -\mathbb{E}\left((x - \mu_{x})^\top A (x - \mu_{x}) \right),
\end{align*}
and $\mathbb{E}\left((x - \mu_{x})^\top A (x - \mu_{x}) \right) \geq 0$ in~\eqref{eq:thm1_eq1}, we obtain the constraint bound in~\eqref{eq:quad_violation}. Using the feedback tracking bound~(\ref{eq:propagation_error}) in~\eqref{eq:quad_violation}, we can show that the constraint violation bound is a function of learning bounds $c_1$, $c_2$, and $c_3$. \end{proof}
Note that if the learning method converges, i.e., $c_1\to0$, $c_2\to0$, and $c_3\to0$, then $\bar{p} \to \mu_{x}$. The quadratic constraint violation in~(\ref{eq:quad_violation}) depends on the tracking error and the size of the ellipsoidal set described by $A$.  
\begin{theorem} \label{thm:lin_chance_constraint}
Given a feasible solution $(x,\bar{u}_{x})$ of~(\ref{eq:stoptprob_sicc}) with $\inf_{x\sim (\mu_{x},\Sigma_{x})}\mathrm{Pr}(a^\top x + b \leq 0) \geq 1 - \epsilon_{\ell}$, the trajectory $\bar{p}$ of the deterministic dynamics~\eqref{eq:dynamics_original}, with control $u_c$, satisfies the following inequality at any time $t$:
\begin{equation}
\inf_{x\sim (\mu_{x},\Sigma_{x})}\mathrm{Pr}\left(a^\top \bar{p} + b \leq \delta_{\ell}(x) \right) \geq 1-\epsilon_{\ell}, \label{eq:lin_violation}
\end{equation} where $\delta_{\ell}(x) = \scalebox{0.95}{$\|a\|_{2} \mathbb{E}_{x}(\|(\bar{p} - x)\|_{2}) - \|a\|_{2} \sqrt{\frac{1-\epsilon_{\ell}}{\epsilon_{\ell}}}\sqrt{c_4} $}$, $\mathbb{E}_{x}(\|(\bar{p} - x\|_{2})$ is bounded as defined in~\eqref{eq:propagation_error} and $\mathrm{tr}(\Sigma_{x}) = c_4$.
\end{theorem}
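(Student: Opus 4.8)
The plan is to mirror the structure of the proof of Theorem~\ref{thm:quad_chance_constraint}: split the rollout state as $a^\top \bar{p} + b = a^\top x + b + a^\top(\bar{p} - x)$, control the first summand with the feasibility of the planned trajectory, and absorb the tracking discrepancy $a^\top(\bar{p}-x)$ into the threshold $\delta_\ell$ using the bound of Lemma~\ref{lemma:bound}. The natural vehicle is Lemma~\ref{lemma:lin_drcc}, read in both directions.

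First I would interpret the target inequality \eqref{eq:lin_violation} through Lemma~\ref{lemma:lin_drcc}: by the one-sided Chebyshev (Cantelli) bound, the distributionally robust statement $\inf_{x\sim(\mu_x,\Sigma_x)}\mathrm{Pr}(a^\top \bar{p} + b \le \delta_\ell) \ge 1-\epsilon_\ell$ is equivalent to the deterministic scalar inequality $a^\top \mu_{\bar{p}} + b - \delta_\ell + \sqrt{\tfrac{1-\epsilon_\ell}{\epsilon_\ell}}\sqrt{a^\top \Sigma_{\bar{p}} a} \le 0$, where $\mu_{\bar{p}}$ and $\Sigma_{\bar{p}}$ are the mean and covariance of the rollout state. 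It then suffices to certify this inequality. Next I would relate the rollout moments to those of the planned state: writing $a^\top \mu_{\bar{p}} + b = (a^\top \mu_x + b) + a^\top \mathbb{E}(\bar{p}-x)$, Cauchy--Schwarz together with Jensen gives $a^\top \mathbb{E}(\bar{p}-x) \le \|a\|_2\,\mathbb{E}_x(\|\bar{p}-x\|_2)$, and the right-hand side is controlled by the square root of the mean-square tracking error in \eqref{eq:propagation_error} (using $\mathbb{E}\|\bar{p}-x\|_2 \le \sqrt{\mathbb{E}\|\bar{p}-x\|_2^2}$). The term $a^\top \mu_x + b$ is eliminated by the feasibility of $(x,\bar{u}_x)$, which by Lemma~\ref{lemma:lin_drcc} and \eqref{eq:lin_socc} gives $a^\top \mu_x + b \le -\sqrt{\tfrac{1-\epsilon_\ell}{\epsilon_\ell}}\sqrt{a^\top \Sigma_x a}$.

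Finally I would homogenize the covariance contributions through $\sqrt{a^\top \Sigma_x a} \le \|a\|_2 \sqrt{\mathrm{tr}(\Sigma_x)} = \|a\|_2\sqrt{c_4}$, so that every variance term is expressed via $\sqrt{c_4}$, and collect the pieces to reproduce the stated $\delta_\ell$ and close the Cantelli inequality.

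The main obstacle I anticipate is the covariance bookkeeping in this last step. The Cantelli condition for $\bar{p}$ carries the rollout variance $\sqrt{a^\top \Sigma_{\bar{p}} a}$, whereas the safety margin recovered from feasibility is expressed through $\sqrt{a^\top \Sigma_x a}$ (or its relaxation $\|a\|_2\sqrt{c_4}$). Reconciling these—arguing that tracking keeps $\Sigma_{\bar{p}}$ dominated by $\Sigma_x$, or equivalently treating $\bar{p}$ as deterministic for each realization of $x$ so that the $\Sigma_{\bar{p}}$ contribution drops out exactly via the cross-term cancellation used in Theorem~\ref{thm:quad_chance_constraint}—is what must make the $\sqrt{c_4}$ term in $\delta_\ell$ appear with the correct coefficient and sign. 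I expect the decomposition, Cauchy--Schwarz, and Jensen steps to be routine; the delicate accounting is ensuring the worst-case distributional margin inherited from feasibility offsets the variance penalty so that precisely the claimed $\delta_\ell$ is achieved rather than a looser threshold.
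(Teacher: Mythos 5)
Your proposal is essentially the paper's own proof: Lemma~\ref{lemma:lin_drcc} read in both directions, the same mean decomposition handled by Cauchy--Schwarz and Jensen, feasibility of $(x,\bar{u}_x)$ to eliminate $a^\top \mu_x + b$, and the trace bound $\sqrt{a^\top \Sigma_x a} \le \|a\|_2\sqrt{\mathrm{tr}(\Sigma_x)} = \|a\|_2\sqrt{c_4}$. The reconciliation you flag as the main obstacle is resolved exactly as you guessed: the rollout trajectory $\bar{p}$ of~\eqref{eq:dynamics_original} is deterministic, so $\mu_{\bar{p}} = \bar{p}$ and $\Sigma_{\bar{p}} = 0$ and the rollout-covariance term vanishes identically; your ``delicate accounting'' worry is nonetheless well founded, since replacing $-\sqrt{\tfrac{1-\epsilon_\ell}{\epsilon_\ell}}\sqrt{a^\top \Sigma_x a}$ by $-\|a\|_2\sqrt{\tfrac{1-\epsilon_\ell}{\epsilon_\ell}}\sqrt{c_4}$ uses an upper bound where a lower bound is needed, a looseness that the paper's own proof shares.
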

\begin{proof} From Lemma~\ref{lemma:lin_drcc}, the feasible solution $(x,u_{x})$ satisfies the equivalent condition $\mathcal{P}(\mu_{x},\Sigma_{x}) \leq 0$, where $\mathcal{P}(\mu_{x},\Sigma_{x}) = a^\top \mu_{x} + b+ \sqrt{\frac{1-\epsilon_{\ell}}{\epsilon_{\ell}}}\sqrt{a^\top \Sigma_{x}a}$ for the risk measure $\epsilon_{\ell}$, mean $\mu_{x}$, and covariance $\Sigma_{x}$. Consider the similar condition for the actual trajectory $\bar{p}(t)$, $\mathcal{P}(\mu_{\bar{p}},\Sigma_{\bar{p}})$:
\begin{align}
& \mathcal{P}(\mu_{\bar{p}},\Sigma_{\bar{p}}) = a^\top \mu_{\bar{p}} + b+ \sqrt{\frac{1-\epsilon_{\ell}}{\epsilon_{\ell}}}\sqrt{a^\top \Sigma_{\bar{p}} a} \nonumber\\
& = a^\top \mu_{x}  + a^\top (\mu_{\bar{p}} - \mu_{x})+ b + \sqrt{\frac{1-\epsilon_{\ell}}{\epsilon_{\ell}}}\sqrt{a^\top \Sigma_{\bar{p}} a} \nonumber\\ 
& + \sqrt{\frac{1-\epsilon_{\ell}}{\epsilon_{\ell}}}\sqrt{a^\top \Sigma_{x} a} - \sqrt{\frac{1-\epsilon_{\ell}}{\epsilon_{\ell}}}\sqrt{a^\top \Sigma_{x} a} \label{eq:theorem2_eq1}
\end{align}
Note that since the system~\eqref{eq:dynamics_original} is deterministic, we have $\mu_{\bar{p}} = \bar{p}$ and $\Sigma_{\bar{p}} = 0$. Using $\mathcal{P}(\mu_{x},\Sigma_{x}) \leq 0$, the right hand side of the  above inequality reduces to the following:
\begin{align}
\mathcal{P}(\mu_{\bar{p}},\Sigma_{\bar{p}}) & \leq a^\top (\bar{p} - \mu_x) - \sqrt{\frac{1-\epsilon_\ell}{\epsilon_\ell}}\sqrt{a^\top \Sigma_{x} a}. \label{eq:theorem_eq2}
\end{align}
Using the decomposition $\Sigma_{x} = \tilde{G}^\top \tilde{G}$, Cauchy-Schwarz's inequality, and Jensen's inequality, we have   $a^\top (\mu_{\bar{p}} - \mu_{x}) \leq \|a\|_2 \|(\bar{p} - \mu_{x})\|_{2} \leq \|a\|_2 \mathbb{E}_{x}(\|(\bar{p} - x)\|_{2})$.
Using the sub-multiplicative property of $\ell_2$-norm  in the inequality above, we have $\mathcal{P}(\mu_{\bar{p}},\Sigma_{\bar{p}}) \leq \|a\|_{2} \left(\mathbb{E}(\|(\bar{p} - x)\|_{2}) -  \sqrt{\frac{1-\epsilon_{\ell}}{\epsilon_{\ell}}} \|\tilde{G}\|_{F} \right)$. Assuming that $\mathrm{tr}(\Sigma_{x}) = c_4$ in the above inequality, we have 
\begin{align*}
    \mathcal{P}(\mu_{\bar{p}},\Sigma_{\bar{p}}) \leq &\|a\|_{2} \mathbb{E}(\|(\bar{p} - x)\|_{2}) - \|a\|_{2} \sqrt{\frac{1-\epsilon_{\ell}}{\epsilon_{\ell}}} \sqrt{c_4}.
\end{align*}
The above inequality is equivalent to the probabilistic linear constraint in~\eqref{eq:lin_violation}. The bound $\delta_{\ell}(x) = \|a\|_{2} \mathbb{E}(\|(\bar{p} - x)\|_{2}) - \|a\|_{2}\sqrt{\frac{1-\epsilon_{\ell}}{\epsilon_{\ell}}}\sqrt{c_4}$ is a function of the learning bounds in~(\ref{eq:learning_bounds}) by substituting the feedback tracking bound in~\eqref{eq:propagation_error}.
\end{proof}
The linear constraint is offset by $\delta_{\ell}$ leading to constraint violation of the original formulation~\eqref{eq:single_lin_cc}. Note that, if $c_1\to0$, $c_2\to0$, $c_3\to0$, and $c_4 \to 0$, then $\delta_{\ell} \to 0$. In order to ensure real-time safety during trajectory tracking, we use a high gain control for disturbance attenuation with safety filter augmentation for constraint satisfaction.
\vspace{-4pt}
\subsection{Consistency}
Data is collected during the rollout of the nonlinear system to learn a new model for the next epoch. For epoch $e$, the predictor $\hat{g}^e$ follows a multivariate Gaussian distribution $\mathcal{N}(\mu_g^e, \Sigma_g^e)$ and $g$ is the empirical true data. We assume that set $\mathcal{X}_e \subset \mathcal{X}$ generated by the optimization problem in \eqref{eq:stoptprob_sicc} for the first $e$ iterations is a discretization of $\mathcal{X}$. Assuming that there exists a global optimal predictor $\hat{g}^*$ in the function class $\mathcal{G}$ that can achieve the best error $\epsilon^{*}$ at each epoch $e$: 
\begin{equation}
\max_{x \in \mathcal{X}_e} {\|g - \hat{g}^*\|}_2 = \epsilon^*,
\label{eq:best_prediction}
\end{equation}
the consistency of the learning algorithm is given by the convergence of the regret $r_e$, which is defined as the Euclidean distance between the predictor $\hat{g}^e$ and the optimal predictor $\hat{g}^*$:
\begin{equation}
r_e=\|\hat{g}^e - \hat{g}^*\|_2 \rightarrow 0,\ \text{as} \ e \rightarrow \infty.
\label{eq:consistency_definition}
\end{equation}
We prove the consistency of Algorithm~\ref{Algo:inf_scp} in Theorem~\ref{thm:consistency}.
\begin{theorem}\label{thm:consistency}
If Assumption~\ref{asmp:learning_bounds} holds with the maximum prediction error $\max_{x \in \mathcal{X}_e} {\|g - \mu_{g}^e\|}^{2}_2  \triangleq c_1^{e} $ at epoch $e$, then the regret $r_e = \|\hat{g}^e - \hat{g}^*\|_2$ of Algorithm~\ref{Algo:inf_scp} achieves the bound:
\begin{equation}
r_e^2 \le c_1^{e} + dC^2 +\epsilon^{*2}\textrm{ with probability $1-|\mathcal{X}_e| \delta_e$}, \label{eq:regret_bound}
\end{equation}
\begin{equation}
\textrm{where } \ \delta_{e} \triangleq  (C^{d} \prod_p^{d}\Delta_p)^{-1} \tfrac{1}{(2\pi)^{\frac{d}{2}}|\Sigma_{g}^{e}|^{\frac{1}{2}}}e^{-\tfrac{1}{2}C^2{\sum_{p=1}^{d}}\Delta_p} \label{eq:consistency_delta}
\end{equation}
with the output dimension $d$, $\Delta_p \triangleq \sum_{q=1}^d m_{qp} >0$, $\forall p \in \{1,2,\dots,d\}$, and $\mathcal{M} = {(\Sigma_{g}^{e})}^{-1}$ with its $(q,p)$-th element $(m_{qp})$. Furthermore, the regret $r_e \to 0$ as $e \to \infty$.
\end{theorem}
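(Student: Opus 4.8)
The plan is to bound the regret by inserting the empirical data $g$ and the learned mean $\mu_g^e$ as pivots, so that each resulting piece maps to one of the three terms in~\eqref{eq:regret_bound}. Writing $\hat g^e - \hat g^* = (\hat g^e - \mu_g^e) + (\mu_g^e - g) + (g - \hat g^*)$, I would control the middle term by the epoch-wise mean error $\|\mu_g^e - g\|_2^2 \le c_1^e$ (the hypothesis of the theorem, a direct instance of Assumption~\ref{asmp:learning_bounds}), the last term by the optimal-predictor accuracy $\|g - \hat g^*\|_2 \le \epsilon^*$ from~\eqref{eq:best_prediction}, and the first term — the fluctuation of the Gaussian sample $\hat g^e \sim \mathcal N(\mu_g^e,\Sigma_g^e)$ about its mean — by a high-probability box bound. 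On the event that every coordinate of $z \triangleq \hat g^e - \mu_g^e$ obeys $|z_p| \le C$, one has $\|z\|_2^2 \le dC^2$, which supplies the $dC^2$ term; combining the three contributions then yields $r_e^2 \le c_1^e + dC^2 + \epsilon^{*2}$.

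The crux is the Gaussian tail estimate that produces the explicit $\delta_e$ in~\eqref{eq:consistency_delta}. I would estimate the escape probability through the corner event $\{z_p \ge C \ \forall p\}$ directly. With $\mathcal M = (\Sigma_g^e)^{-1}$ symmetric positive definite and the substitution $z_p = C + t_p$, $t_p \ge 0$, the exponent expands as $z^\top \mathcal M z = C^2\, \mathbf 1^\top \mathcal M \mathbf 1 + 2C\,\mathbf 1^\top \mathcal M t + t^\top \mathcal M t$, where $\mathbf 1^\top \mathcal M \mathbf 1 = \sum_{p}\Delta_p$ and $\mathbf 1^\top \mathcal M t = \sum_p \Delta_p t_p$ with $\Delta_p = \sum_q m_{qp}>0$, while $t^\top \mathcal M t \ge 0$ by positive definiteness. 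Dropping the nonnegative quadratic remainder gives the pointwise bound $e^{-\tfrac12 z^\top \mathcal M z} \le e^{-\tfrac12 C^2 \sum_p \Delta_p}\, e^{-C\sum_p \Delta_p t_p}$, and the residual integral over the positive orthant factorizes into $\prod_p \int_0^\infty e^{-C\Delta_p t_p}\,dt_p = (C^d \prod_p \Delta_p)^{-1}$. Multiplying by the Gaussian normalizer $((2\pi)^{d/2}|\Sigma_g^e|^{1/2})^{-1}$ reproduces exactly the expression for $\delta_e$.

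Since $\mathcal X_e$ is a finite discretization of $\mathcal X$, a union bound over its $|\mathcal X_e|$ points turns the per-point failure probability $\delta_e$ into a total failure probability at most $|\mathcal X_e|\delta_e$, so~\eqref{eq:regret_bound} holds with probability $1-|\mathcal X_e|\delta_e$. For the asymptotic claim, I would argue that the information-maximizing exploration in Info-SNOC drives the collected data to shrink the predictive covariance, $\Sigma_g^e \to 0$, so the column sums $\Delta_p$ of $\mathcal M$ diverge; this lets me send the half-width $C \to 0$ while keeping $e^{-\tfrac12 C^2\sum_p\Delta_p}$, and hence $\delta_e$, vanishing, so $dC^2 \to 0$. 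Combined with $c_1^e \to 0$ as the mean predictor improves with data and $\epsilon^* \to 0$ under realizability of the class $\mathcal G$, every term in~\eqref{eq:regret_bound} vanishes and $r_e \to 0$.

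The main obstacle is the tail estimate: the additive form of $\delta_e$ is tight only for the single orthant $\{z_p \ge C\}$, whereas the coordinate bound $\|z\|_2^2 \le dC^2$ requires the full box $z \in [-C,C]^d$, so I must either argue the relevant deviation is effectively one-sided in this setting or carry the orthant/slab union bound and absorb the resulting constants. A secondary subtlety is the clean, cross-term-free addition $r_e^2 \le c_1^e + dC^2 + \epsilon^{*2}$: the three pivot differences are not orthogonal in general, so I would bound the cross terms and fold them into the constants, or restrict to the regime where $c_1^e$, $C$, and $\epsilon^*$ are small enough that the linear cross terms are dominated. Finally, the consistency step relies on the external fact — established by the exploration design and the uncertainty-reduction argument elsewhere in the paper — that $\Sigma_g^e$ and $c_1^e$ genuinely decay with the epoch count.
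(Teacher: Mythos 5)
Your proposal is correct and follows essentially the same route as the paper's proof: pivot the regret through $\mu_{g}^{e}$ and $g$, bound the middle term by $c_1^{e}$ via Assumption~\ref{asmp:learning_bounds}, the last term by $\epsilon^{*}$ via~\eqref{eq:best_prediction}, control the Gaussian fluctuation $\hat{g}^e - \mu_{g}^{e}$ on a high-probability event of size $1-\delta_e$ per point, union-bound over the discretization $\mathcal{X}_e$, and obtain $r_e \to 0$ by sending $c_1^e$, $\Sigma_{g}^{e}$, and hence $C$ to zero under realizability ($\epsilon^* = 0$). The one substantive difference is that where the paper simply cites the multivariate Mills-ratio tail inequality of~\cite{savage1962mills} to assert $\mathrm{Pr}(\hat{g}^e - \mu_{g}^{e} \ge C\mathbf{e}) < \delta_e$, you derive it from scratch: the substitution $z = C\mathbf{1} + t$, the expansion of $z^\top \mathcal{M} z$, discarding $t^\top \mathcal{M} t \ge 0$, and the factorized exponential integral over the positive orthant reproduce~\eqref{eq:consistency_delta} exactly, which makes the argument self-contained. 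You should also know that the two ``obstacles'' you flag are not deficiencies of your attempt relative to the paper --- they are present in the paper's own proof. The paper likewise bounds only the one-sided corner event $\{\hat{g}^e - \mu_{g}^{e} \ge C\mathbf{e}\}$ and then uses its complement as though it certified the two-sided box bound needed for $\|\hat{g}^e - \mu_{g}^{e}\|_2^2 \le dC^2$, and the paper's ``triangular inequality'' step equally drops the cross terms in passing from $\|\mu_{g}^{e} + C\mathbf{e} - g\|_2^2$ to $\|\mu_{g}^{e} - g\|_2^2 + dC^2$ without any orthogonality to justify it; a union bound over coordinates (or orthants), costing only a dimension-dependent factor inside $\delta_e$, would repair both. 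The only other divergence is minor: in the asymptotic step the paper routes through the robust-regression model-selection bound $|\|\mu_{g}^{e}\|_2^2 + \mathrm{tr}(\Sigma_{g}^{e}) - \|g\|_2^2| \le \omega_e$ of~\cite{liu2019robust} together with $c_2^e \to 0$, whereas you directly posit $c_1^e \to 0$ and $\Sigma_{g}^{e} \to 0$ from the exploration design; both rest on the same external convergence facts about the learner, so neither is more rigorous than the other.
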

\begin{proof}
Using the inequality $\mathrm{Pr}(\hat{g}^e - \mu_{g}^{e}\ge C\mathbf{e})< \delta_{e}$, $\forall x \in \mathcal{X}_e$, since $\hat{g}^e \sim \mathcal{N}(\mu_{g}^{e}, \Sigma_{g}^{e})$, we can bound $r_{e}$ as follows:
\begin{equation}
    \begin{aligned}
    r^2_e \le \|\mu_{g}^{e} + C\mathbf{e} - \hat{g}^*\|_2^2,
    \end{aligned} \label{eq:theorem3_eq1}
\end{equation}
where $\delta_e$ is defined in~\eqref{eq:consistency_delta}, and $\mathbf{e}$ is unit vector in $d$ dimensions. Hence, $C$ is a function of $|\mathcal{X}_e|\delta_e$. This is the tail probabilities inequality~\cite{savage1962mills} of multivariate Gaussian distributions. The error $r_e^2$ is then bounded using the empirical prediction error $\|\mu_{g}^{e} - g\|_2^2$ and the best prediction error~\eqref{eq:best_prediction} as $r^2_e \leq  \|\mu_{g}^{e} + C\mathbf{e} - g\|_2^2 + \|g - \hat{g}^*\|_2^2$. By using the triangular inequality, we get the bound in~\eqref{eq:regret_bound} as follows: \begin{align}
    r^2_e \leq\|\mu_{g}^{e} - g\|_2^2 + dC^2 + \epsilon^{*2}
    \le &c_1^{e}  + dC^2 + \epsilon^{*2}.
\end{align}
At any epoch $e$, we have $x \in \mathcal{X}_e$, $|\|\mu_{g}^{e}\|_2^2 + \mathrm{tr}(\Sigma_{g}^{e}) - \|g\|_2^2| \le \omega_e$ ~\cite{liu2019robust}, where $\omega_e$ is a hyper-parameter that is associated with model selection in robust regression. Using $\mathrm{tr}(\Sigma_{g}^{e}) \leq c^{e}_2$,~\eqref{eq:theorem3_eq1} becomes $r^2_e \le \omega_e +  c_2^{e} + dC^2 + \epsilon^{*2}.$

 Assuming that the ground truth model is in the function class $\mathcal{G}$, we have $\epsilon^* = 0$. With infinite data, the hyper parameter $\omega_e = 0$ and $c^{e}_2 \to 0$ at each data point. For achieving the same value of $\delta_{e}$, $C \to 0$ with shrinking $\Sigma_{g}^{e}$. This implies that for large data, i.e., as $e \to \infty$, we have $r_e \to 0$. The regret $r_e$ is bounded as shown in~\eqref{eq:regret_bound}. Therefore, Algorithm~\ref{Algo:inf_scp} is consistent with high probability. Note that the Info-SNOC approach outputs trajectories of finite variance at epoch $e$, given that $\mathrm{tr}(\Sigma_{g}^{e})$ is bounded.
\end{proof}
During the initial exploration phase, the finite dimensional approximation in the gPC projection might add to the error bounds discussed in Theorems~\ref{thm:quad_chance_constraint} and~\ref{thm:lin_chance_constraint} leading to safety violation. This approximation error is automatically incorporated as residual dynamics in the stochastic dynamics when collecting data during rollout and learned using robust regression. Thus ensuring that safety is guaranteed with increasing data.
\section{Simulation and Discussion}
\label{sec:example}
We test the Info-SNOC framework shown in Fig.~\ref{fig:work-flow} on the three degree-of-freedom robotic spacecraft simulator~\cite{nakka2018six} dynamics with an unknown friction model and an over-actuated thruster configuration that is used for a real spacecraft. The dynamics of $\vec{x}=(x,y,\theta)^T$ with respect to an inertial frame is given as: 
\begin{equation}
    \begin{aligned}
    \ddot{\vec{x}} = f(\theta,H,u) + g(\dot{x},\dot{y},\dot{\theta}),\ f =  \mathrm{blkdiag}(R(\theta),1)H u,
    \end{aligned}\nonumber
\end{equation}
where $R(\theta) \in SO(2)$. The states $(x,y) \in \mathbb{R}^{2}$, $\theta \in [0,2\pi)$ denote position and orientation respectively. The function $g$ is unknown, and assumed to be linear viscous damping in the simulations. The control effort $u \in \mathbb{R}^{8}$ is constrained to be $0 \leq u \leq 1$, and $H(m,I,l,b) \in \mathbb{R}^{3 \times 8}$ is the control allocation matrix where $m = $\SI{17}{kg} and $I =$ \SI{2}{kg.m.s^{-2}} are the mass and the inertia matrix, and $l=b=$ \SI{0.4}{m} is the moment arm.
\begin{figure}
\vspace{5pt}
\centering{
\subfloat[]{
    \includegraphics[width=0.22\textwidth,height=0.9in]{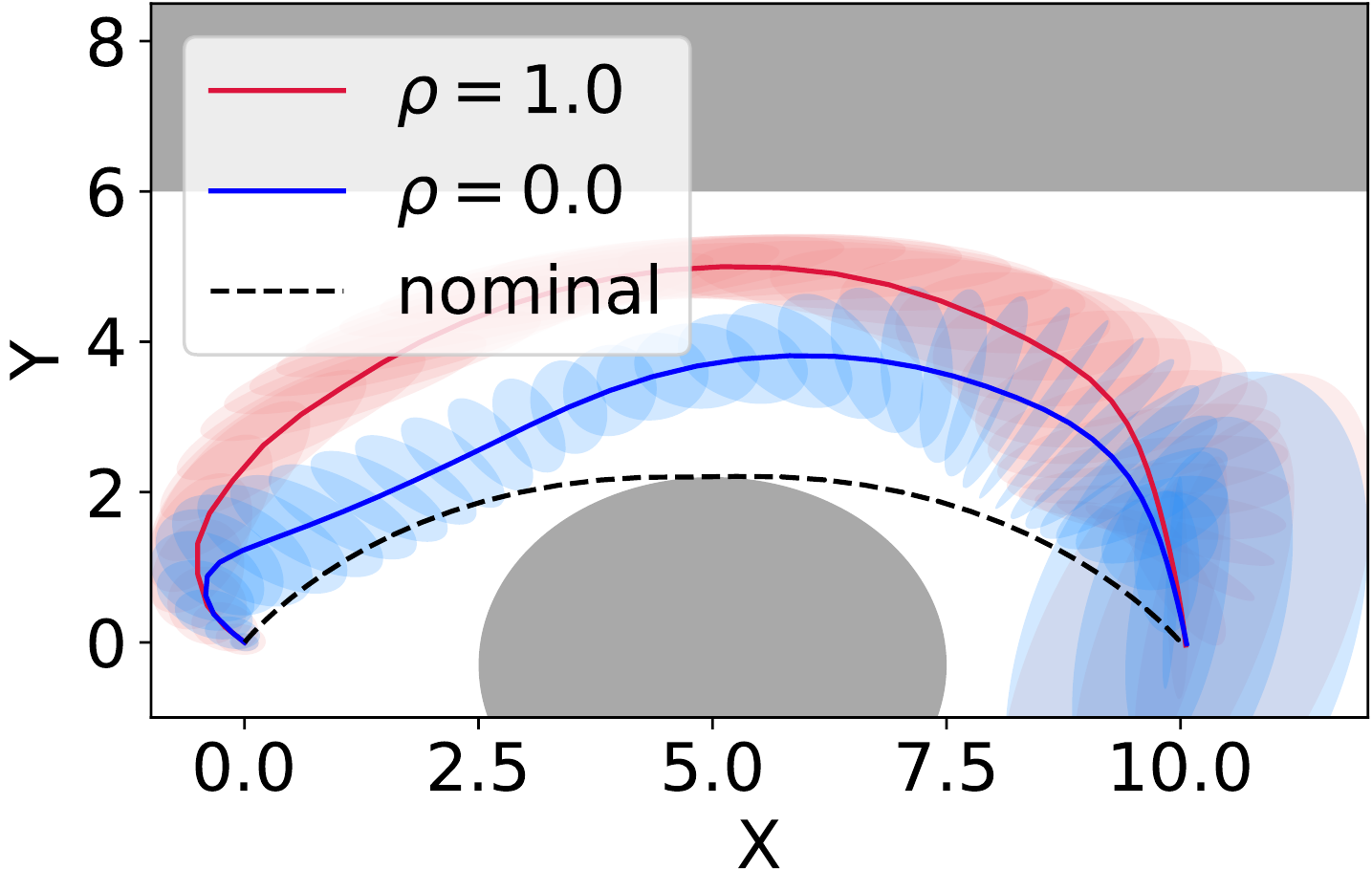}
    \label{fig:case1_perf_inf}
    }
}
\centering{
\subfloat[]{
    \includegraphics[width=0.22\textwidth,height=0.9in]{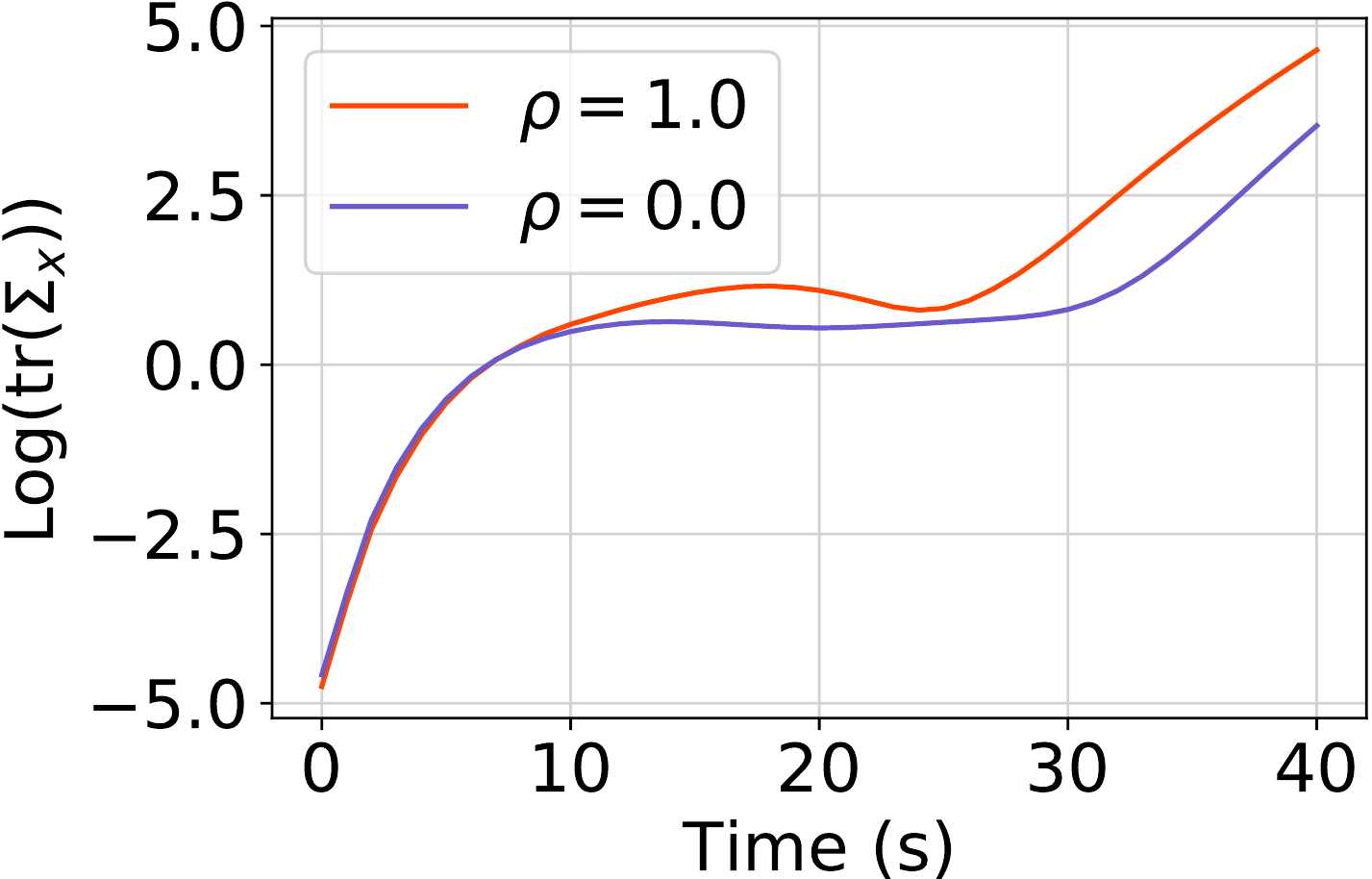} 
    \label{fig:case1_tr_time}
    }
}
\centering{
\subfloat[]{
    \includegraphics[width=0.22\textwidth,height=0.9in]{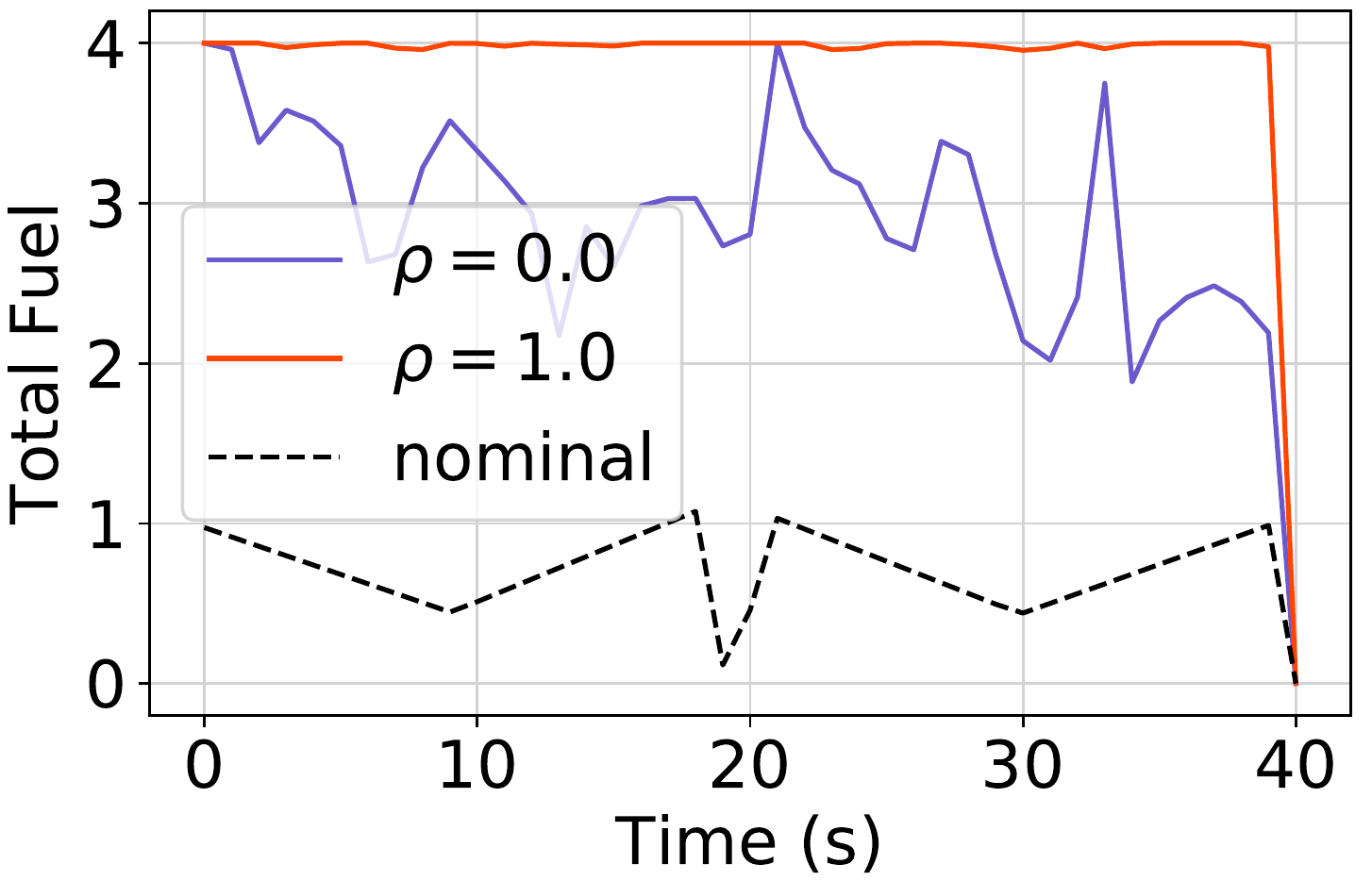}
    \label{fig:case1_fuel_time}
    }
}
\centering{
\subfloat[]{
    \includegraphics[width=0.22\textwidth,height=0.9in]{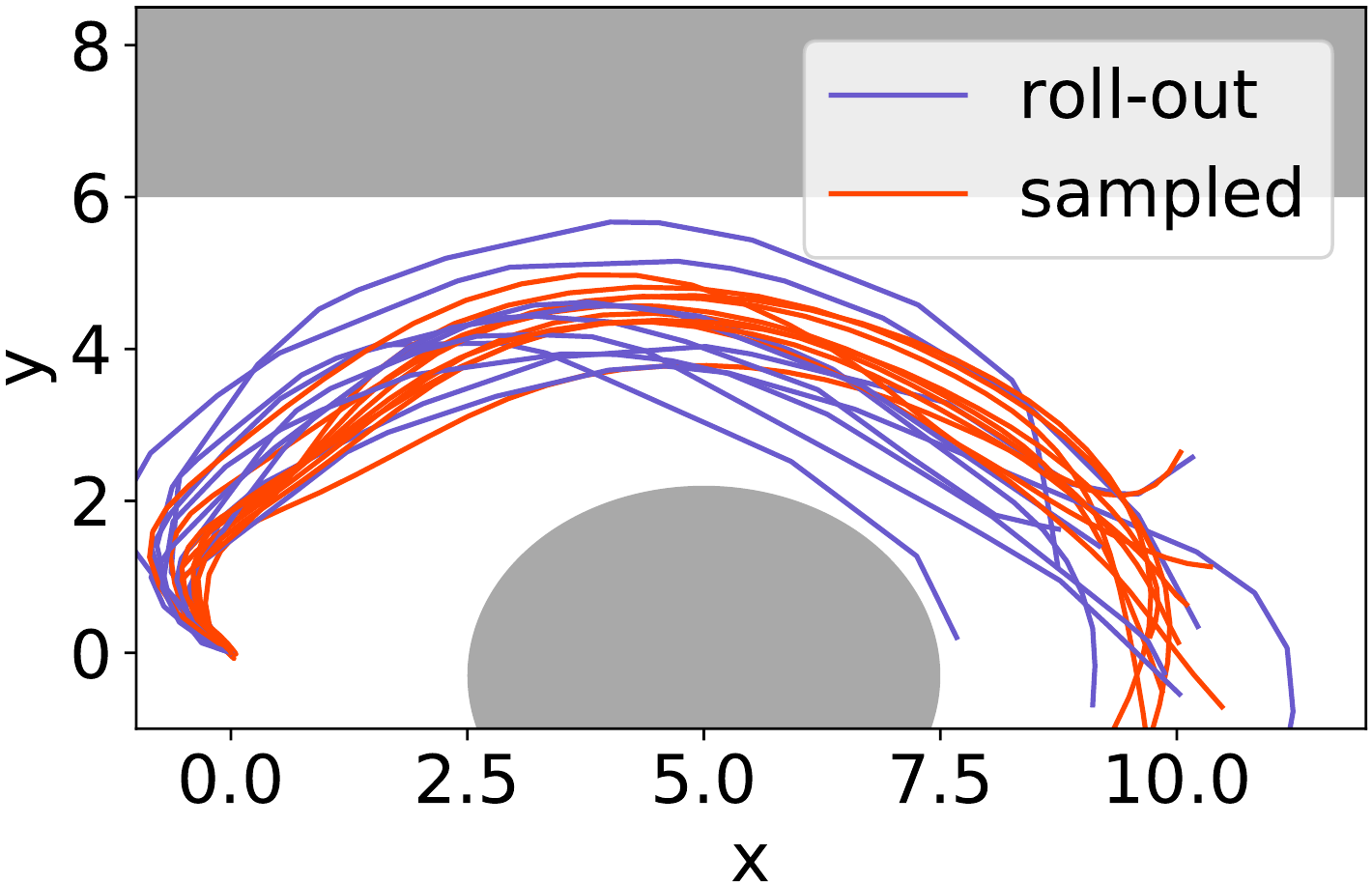}
    \label{fig:case1_roll_out}
    }
}
\caption{Info-SNOC applied to Scenario 1. In Fig.~(a), we show the motion plan along with the 2$\sigma$ confidence in position of the performance trajectory ($\rho = 0$) and  the information trajectory ($\rho=1$) computed using Info-SNOC, and the nominal trajectory computed using SCP under nominal dynamics. In Fig.~(b), we show the trace of $\Sigma_x$ w.r.t time. The information trajectory ($\rho=1$) has higher $\Sigma_x$ compared to the performance trajectory ($\rho = 0$). We compare total open-loop fuel computed at each time step in Fig.~(c), and in Fig.~(d) we demonstrate collision avoidance during exploration for 20 trials of rollout using a safety augmented stable controller.}
\label{fig:case1}
\vspace{-10pt}
\end{figure}
The unknown function $g = \mathrm{diag}([-0.02, -0.02, -0.002])\dot{x}$ is modeled as a multivariate Gaussian distribution to learn from data using robust regression. To get an initial estimate of the model, we explore a small safe set around the initial condition and collect data. For the following test cases, we collect 40 data points to have a feasible optimal control problem~\eqref{eq:stoptprob_sicc} in the planning stage as discussed in Sec.~\ref{sec:computation}. The algorithm is initialized with a \emph{nominal trajectory} computed using deterministic SCP under nominal dynamics.

\subsubsection*{Info-SNOC Results} The learned dynamics is used to design safe trajectories for \SI{40}{s} using the Info-SNOC algorithm for Scenarios 1 and 2 as shown in Figs.~\ref{fig:case1_perf_inf} and~\ref{fig:case2_perf_inf}, respectively. \textcolor{black}{Scenario 1 has a wall at $y=$ \SI{6}{m} and a circular obstacle of radius \SI{2.5}{m} at $(5,-0.3)$, and scenario 2 has two circular obstacles of radius \SI{0.6}{m} at $(5,-0.5)$ and $(10,0.5)$ as collision constraints respectively.} The obstacles in both scenarios are transformed to linear chance constraints and the terminal constraint is transformed to quadratic chance constraint as discussed in~\cite{nakka2019nsoc}, with a risk measure of $\epsilon_{\ell} = \epsilon_{q}= 0.05$. The Info-SNOC algorithm is applied with $\rho = 0$ ($\mathcal{L}_{1}$-norm control cost) and $\rho=1$ (information cost). We compare the mean $\mu_x$ and $2\sigma$-confidence ellipse around $\mu_x$ of the trajectories with the nominal trajectory. We observe that for the $\rho=1$, the safe trajectory explores larger state-space compared to the $\rho=0$ case, which corresponds to the fuel-optimal trajectory. The total control effort at each time is shown in Fig.~\ref{fig:case1_fuel_time}. It shows that information trajectory uses more energy compared to the performance trajectory. The extra fuel is used to explore the domain for improving the model. The terminal variance $\Sigma_x(t_f)$ in both scenarios is large due to the correlation among the multiple dimensions of $g$ that are predicted by the learning algorithm.
\begin{figure}
\vspace{5pt}
\centering{
\subfloat[]{
    \includegraphics[width=0.22\textwidth,height=0.9in]{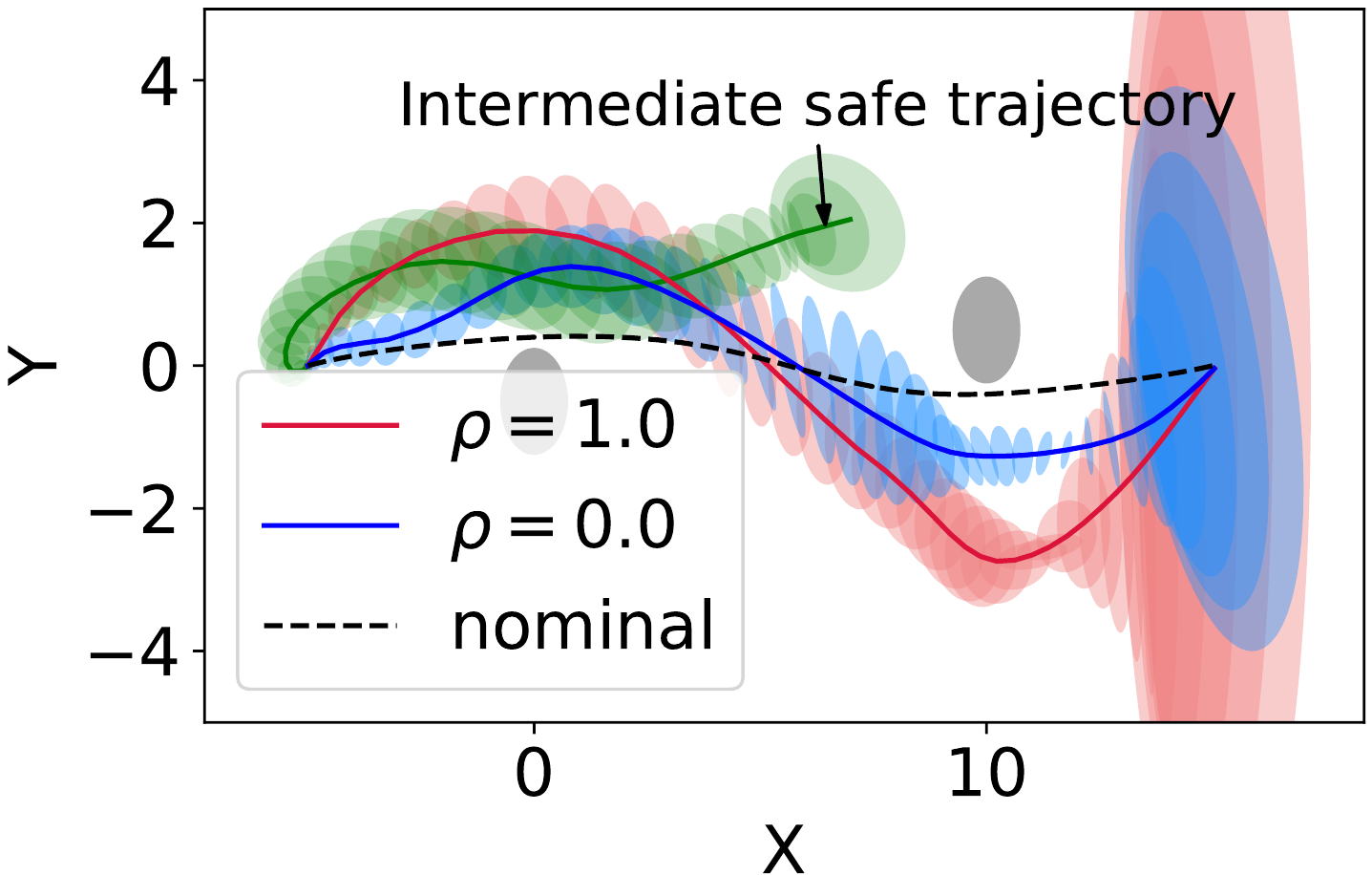}
    \label{fig:case2_perf_inf}
    }
}
\centering{
\subfloat[]{
    \includegraphics[width=0.22\textwidth,height=0.9in]{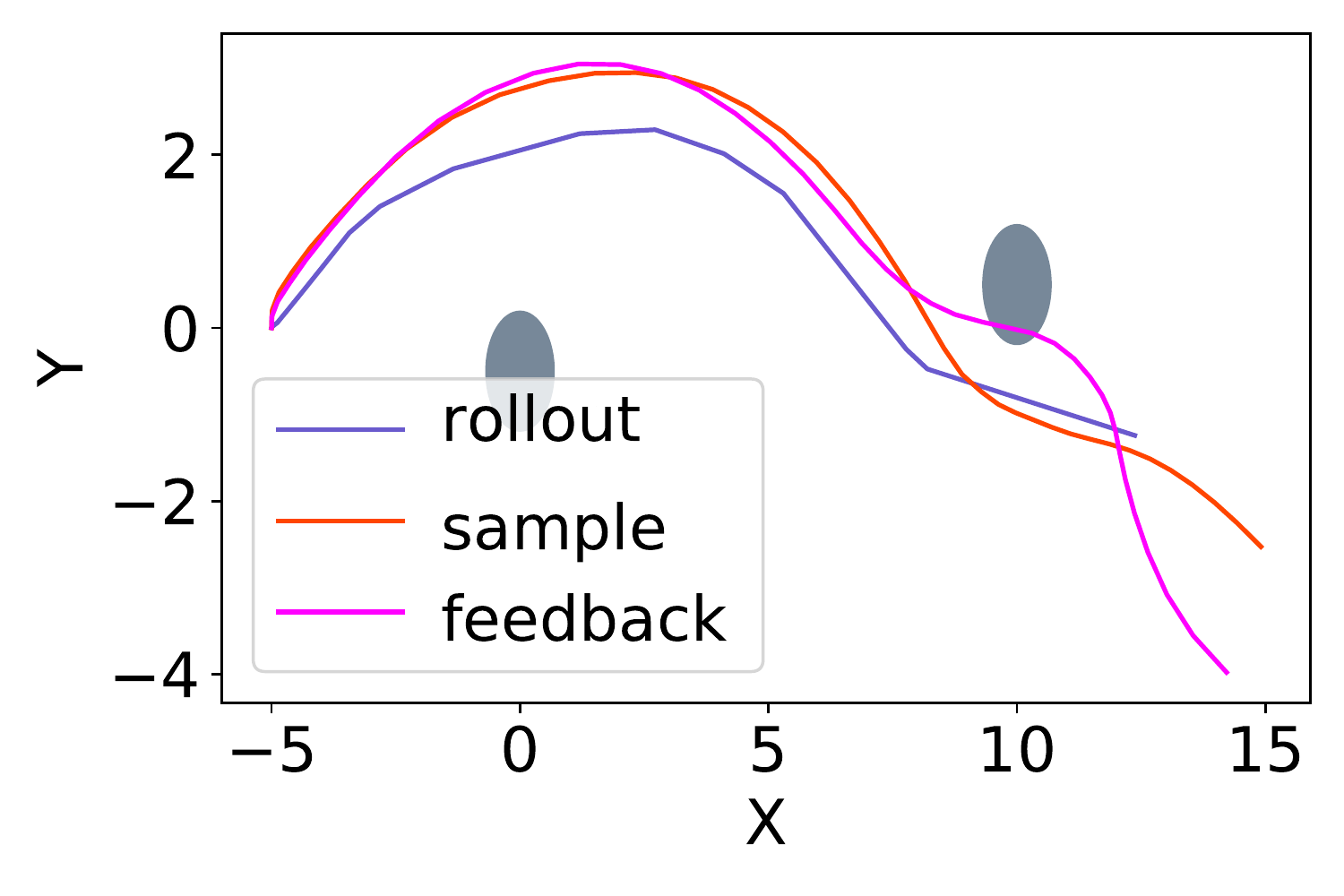}
    \label{fig:case2_roll_out}
    }
}
\caption{\textcolor{black}{Info-SNOC applied to Scenario 2. In Fig.~(a), we show a comparison of the performance trajectory ($\rho = 0$), the information trajectory ($\rho=1$), and an intermediate safe trajectory (green) computed using Info-SNOC and the nominal trajectory computed using deterministic SCP under nominal dynamics. In Fig.~(b), we compare a sampled trajectory with the trajectories generated by feedback tracking and rollout with a safety filter.}}
\label{fig:case2}
\vspace{-10pt}
\end{figure}
\subsubsection*{Safe Rollout} The safe trajectories in Figs.~\ref{fig:case1_perf_inf} and~\ref{fig:case2_perf_inf} are sampled following the method discussed in Sec.~\ref{subsec:exploration} and the rollout is performed using the controller designed in~\cite{nakka2018six} that satisfies Lemma~\ref{lemma:bound}. The sample trajectories and rollout trajectories are shown in red and blue respectively in Figs.~\ref{fig:case1_roll_out} and~\ref{fig:case2_roll_out}. The sampled trajectory is safe with the risk measure of collision $\epsilon_{\ell} = 0.05$ around the obstacles. Rollout trajectories, with a feedback controller, collide with the obstacles due to the following two reasons: 1) the learning bounds~\eqref{eq:learning_bounds} lead to constraint violation as discussed in Theorems~\ref{thm:quad_chance_constraint} and~\ref{thm:lin_chance_constraint}, and 2) \textcolor{black}{the state-dependent uncertainty model $\mathcal{N}(\mu_g,\Sigma_g)$ might predict large $\Sigma_g$ that can saturate the actuators.} Saturated actuators cannot compensate for the unmodelled dynamics.
In order to ensure safety, we augment the feedback controller with a real-time safety augmentation using barrier-function -based quadratic program~\cite{ames2019control}. Using this filter, the blue rollout trajectories are diverted from obstacles, as seen in Figs.~\ref{fig:case1_roll_out} and~\ref{fig:case2_roll_out}, avoiding constraint violation.
\begin{figure}[h]
    \centering
    \includegraphics[width=3.4in,height=0.9in]{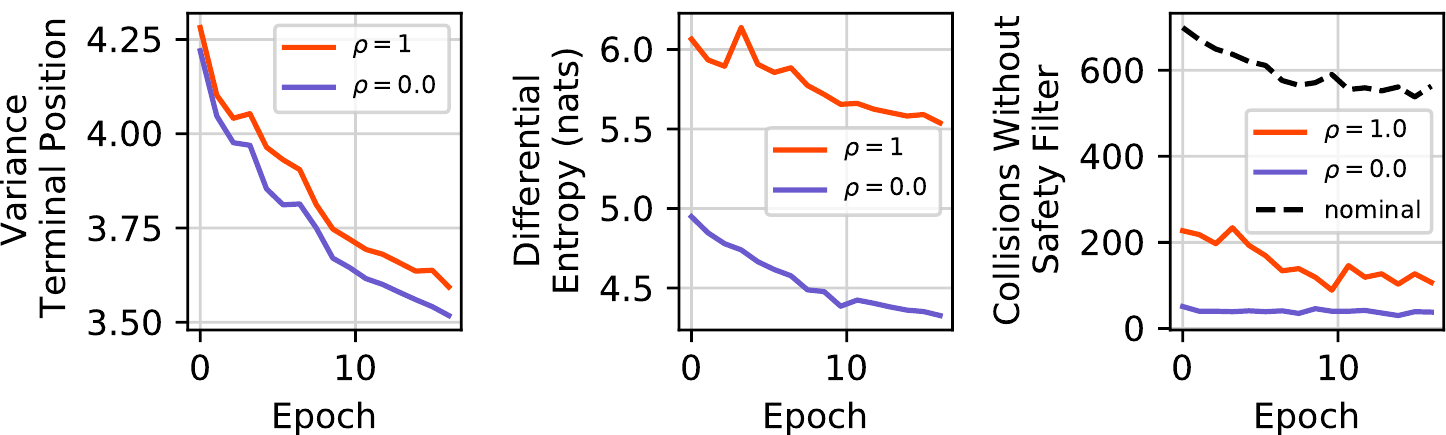}
    \caption{\textcolor{black}{Performance over epochs for Scenario 1. Left: we show decrease in the terminal position variance over epochs demonstrating improved goal reaching with epoch. Center: the differential entropy of the prediction variance $\Sigma_{g}$ for information trajectory ($\rho=1$) is larger compared to the performance trajectory ($\rho=0$). Right: the number of collisions during rollout for $1000$ trials decrease as the learning converges, validating Theorem~\ref{thm:lin_chance_constraint}.}}
    \label{fig:learning_performance}
\end{figure}
\subsubsection*{Consistency} The data collected during rollout is appended to the earlier data to learn a new model. Figure~\ref{fig:learning_performance} \textcolor{black}{shows improvement in control performance (control cost for $\rho = 0.0$) with increasing number of epochs. The prediction variance $\Sigma_{g}$ decreases from $1e-2$ to $0.3e-2$ over 15 epochs.} The main assumption for consistency, which states that the variance of the trajectory computed using~\eqref{eq:stoptprob_sicc} is bounded and decreasing, is satisfied, thereby demonstrating the correctness of Theorem~\ref{thm:consistency}. We observe that the differential entropy of the information trajectory is higher (i.e., contains more information about $g$) than the performance trajectory by design.

We observed that $\Sigma_{x}(t_f)$ of the information trajectory $(\rho=1)$ computed using Info-SNOC decreases from $104.11$ to $61.46$ over 15 epochs, by applying the framework in Fig.~\ref{fig:work-flow} for Scenario 1, demonstrating the increase in probability of reaching the quadratic terminal set. This validates the Theorem~\ref{thm:quad_chance_constraint}. The number of collisions of the rollout trajectory without a safety filter over a $1000$ trials at each epoch is shown in Fig.~\ref{fig:learning_performance}. The number of collisions decrease from $227$ to $107$ for the information trajectory, from $51$ to $38$ for the performance trajectory, and from $699$ to $563$ for the nominal trajectory over $15$ epochs. This increase in the probability of the linear chance constraint satisfaction with epoch validates the Theorem~\ref{thm:lin_chance_constraint}. In the first epoch, we observed that the rollout using the nominal trajectory leads to $69.9\%$ collisions over $1000$ trials, using the information trajectory leads to $22\%$ collisions, and using the performance trajectory leads to less than $5.1\%$ collisions, demonstrating the effectiveness of Info-SNOC.
\section{Conclusion} \label{sec:conclusion}
We present a new method of learning-based safe exploration and motion planning by solving information-cost stochastic optimal control using a partially learned nonlinear dynamical model. The variance prediction of the learned model is used as the information cost, while the safety is formulated as distributionally robust chance constraints. The problem is then projected to the generalized polynomial chaos space and solved using sequential convex programming. We use the Info-SNOC method to compute a safe and information-rich pool of trajectories for rollout using an exponentially stable controller with a safety filter augmentation for safe data collection. We analyze the probability of constraint violation for both linear and quadratic constraints. We show that the safety constraints are satisfied for rollout under learned dynamics, as the learned model converges to the optimal predictor over epochs. The consistency of the learning method using the Info-SNOC algorithm is proven under mild assumptions. 

The episodic learning framework was applied to the robotic spacecraft model to explore the state space and learn the friction under collision constraints. We compute a pool of safe and optimal trajectories using the Info-SNOC algorithm for a learned spacecraft model under collision constraints and discuss an approach for rollout using a stable feedback control law to collect data for learning. We validate the consistency of robust regression method and the safety guarantees by showing the reduction in variance of the learned model predictions and the number of collisions over 15 epochs respectively.
\vspace{-7pt}
\bibliographystyle{IEEEtran}
\bibliography{learn_soc.bib}

\begin{thebibliography}{10}
\providecommand{\url}[1]{#1}
\csname url@samestyle\endcsname
\providecommand{\newblock}{\relax}
\providecommand{\bibinfo}[2]{#2}
\providecommand{\BIBentrySTDinterwordspacing}{\spaceskip=0pt\relax}
\providecommand{\BIBentryALTinterwordstretchfactor}{4}
\providecommand{\BIBentryALTinterwordspacing}{\spaceskip=\fontdimen2\font plus
\BIBentryALTinterwordstretchfactor\fontdimen3\font minus
  \fontdimen4\font\relax}
\providecommand{\BIBforeignlanguage}[2]{{%
\expandafter\ifx\csname l@#1\endcsname\relax
\typeout{** WARNING: IEEEtran.bst: No hyphenation pattern has been}%
\typeout{** loaded for the language `#1'. Using the pattern for}%
\typeout{** the default language instead.}%
\else
\language=\csname l@#1\endcsname
\fi
#2}}
\providecommand{\BIBdecl}{\relax}
\BIBdecl

\bibitem{shi2019neural}
G.~Shi, X.~Shi, M.~O'Connell, R.~Yu, K.~Azizzadenesheli, A.~Anandkumar, Y.~Yue,
  and S.-J. Chung, ``Neural lander: Stable drone landing control using learned
  dynamics,'' in \emph{Proc. IEEE Int. Conf. Robot. Autom.}, 2019, pp.
  9784--9790.

\bibitem{dean2017sample}
S.~Dean, H.~Mania, N.~Matni, B.~Recht, and S.~Tu, ``On the sample complexity of
  the linear quadratic regulator,'' \emph{Found. Comput. Math.}, pp. 1--47,
  2017.

\bibitem{ostafew2016learning}
C.~J. Ostafew, A.~P. Schoellig, T.~D. Barfoot, and J.~Collier, ``Learning-based
  nonlinear model predictive control to improve vision-based mobile robot path
  tracking,'' \emph{J. Field Robot.}, vol.~33, no.~1, pp. 133--152, 2016.

\bibitem{punjani2015deep}
A.~Punjani and P.~Abbeel, ``Deep learning helicopter dynamics models,'' in
  \emph{Proc. IEEE Int. Conf. Robot. Autom.}, 2015, pp. 3223--3230.

\bibitem{calafiore2006distributionally}
G.~C. Calafiore and L.~El~Ghaoui, ``On distributionally robust
  chance-constrained linear programs,'' \emph{J. Opt. Theory App.}, vol. 130,
  no.~1, pp. 1--22, 2006.

\bibitem{nakka2019nsoc}
Y.~K. Nakka and S.-J. Chung, ``Trajectory optimization for chance-constrained
  nonlinear stochastic systems,'' in \emph{Proc. IEEE Conf. Decis. Control},
  2019.

\bibitem{dani2014observer}
A.~P. Dani, S.-J. Chung, and S.~Hutchinson, ``Observer design for stochastic
  nonlinear systems via contraction-based incremental stability,'' \emph{IEEE
  Trans. Autom. Control}, vol.~60, no.~3, pp. 700--714, 2014.

\bibitem{koller2018learning}
T.~Koller, F.~Berkenkamp, M.~Turchetta, and A.~Krause, ``Learning-based model
  predictive control for safe exploration,'' in \emph{Proc. IEEE Conf. Decis.
  Control}, 2018, pp. 6059--6066.

\bibitem{aswani2013provably}
A.~Aswani, H.~Gonzalez, S.~S. Sastry, and C.~Tomlin, ``Provably safe and robust
  learning-based model predictive control,'' \emph{Automatica}, vol.~49, no.~5,
  pp. 1216--1226, 2013.

\bibitem{ostafew2016robust}
C.~J. Ostafew, A.~P. Schoellig, and T.~D. Barfoot, ``Robust constrained
  learning-based {NMPC} enabling reliable mobile robot path tracking,''
  \emph{Int. J. Robot. Res.}, vol.~35, no.~13, pp. 1547--1563, 2016.

\bibitem{hewing2018cautious}
L.~Hewing, A.~Liniger, and M.~N. Zeilinger, ``Cautious {NMPC} with {Gaussian}
  process dynamics for autonomous miniature race cars,'' in \emph{Euro. Control
  Conf.}, 2018, pp. 1341--1348.

\bibitem{blackmore2011chance}
L.~Blackmore, M.~Ono, and B.~C. Williams, ``Chance-constrained optimal path
  planning with obstacles,'' \emph{IEEE Trans. Robot.}, vol.~27, no.~6, pp.
  1080--1094, 2011.

\bibitem{mesbah2018stochastic}
A.~Mesbah, ``Stochastic model predictive control with active uncertainty
  learning: a survey on dual control,'' \emph{Ann. Rev. in Cont.}, vol.~45, pp.
  107--117, 2018.

\bibitem{wang2020fast}
A.~Wang, X.~Huang, A.~Jasour, and B.~Williams, ``Fast risk assessment for
  autonomous vehicles using learned models of agent futures,'' in
  \emph{Robotics: Sci. Systems}, 2020.

\bibitem{feldbaum1960dual}
A.~A. Feldbaum, ``Dual control theory,'' \emph{Autom. Remote Control}, vol.~21,
  no.~9, pp. 874--1039, 1960.

\bibitem{cheng2015robust}
Y.~Cheng, S.~Haghighat, and S.~Di~Cairano, ``Robust dual control {MPC} with
  application to soft-landing control,'' in \emph{Proc. Amer. Control Conf.},
  2015, pp. 3862--3867.

\bibitem{dinh2010local}
Q.~T. Dinh and M.~Diehl, ``Local convergence of sequential convex programming
  for nonconvex optimization,'' in \emph{Recent Adv. Opt. \& its App.
  Engr.}\hskip 1em plus 0.5em minus 0.4em\relax Springer, 2010, pp. 93--102.

\bibitem{morgan2014model}
D.~Morgan, S.-J. Chung, and F.~Y. Hadaegh, ``Model predictive control of swarms
  of spacecraft using sequential convex programming,'' \emph{J. Guid. Control
  Dyn.}, vol.~37, no.~6, pp. 1725--1740, 2014.

\bibitem{morgan2016swarm}
D.~Morgan, G.~P. Subramanian, S.-J. Chung, and F.~Y. Hadaegh, ``Swarm
  assignment and trajectory optimization using variable-swarm, distributed
  auction assignment and sequential convex programming,'' \emph{Int. J. Robot.
  Res.}, vol.~35, no.~10, pp. 1261--1285, 2016.

\bibitem{buisson2019actively}
M.~Buisson-Fenet, F.~Solowjow, and S.~Trimpe, ``Actively learning {G}aussian
  process dynamics,'' \emph{arXiv:1911.09946}, 2019.

\bibitem{srinivas2010gaussian}
N.~Srinivas, A.~Krause, S.~Kakade, and M.~Seeger, ``Gaussian process
  optimization in the bandit setting: no regret and experimental design,'' in
  \emph{Proc. Int. Conf. Mach. Learning}, 2010, pp. 1015--1022.

\bibitem{chen2016robust}
X.~Chen, M.~Monfort, A.~Liu, and B.~D. Ziebart, ``Robust covariate shift
  regression,'' in \emph{Artif. Intell. Stat.}, 2016, pp. 1270--1279.

\bibitem{nakka2018six}
Y.~K. Nakka, R.~C. Foust, E.~S. Lupu, D.~B. Elliott, I.~S. Crowell, S.-J.
  Chung, and F.~Y. Hadaegh, ``Six degree-of-freedom spacecraft dynamics
  simulator for formation control research,'' in \emph{AAS/AIAA Astrodyn.
  Specialist Conf.}, 2018, pp. 3367--3387.

\bibitem{liu2019robust}
A.~Liu, G.~Shi, S.-J. Chung, A.~Anandkumar, and Y.~Yue, ``Robust regression for
  safe exploration in control,'' \emph{arXiv:1906.05819}, 2019.

\bibitem{arnold1974stochastic}
L.~Arnold, \emph{Stochastic Differential Equations}, 1974.

\bibitem{ono2008iterative}
M.~Ono and B.~C. Williams, ``Iterative risk allocation: A new approach to
  robust model predictive control with a joint chance constraint,'' in
  \emph{Proc. IEEE Conf. Decis. Control}, 2008, pp. 3427--3432.

\bibitem{savage1962mills}
I.~R. Savage, ``Mills’ ratio for multivariate normal distributions,''
  \emph{J. Res. Nat. Bur. Standards Sect. B}, vol.~66, pp. 93--96, 1962.

\bibitem{ames2019control}
A.~D. Ames, S.~Coogan, M.~Egerstedt, G.~Notomista, K.~Sreenath, and P.~Tabuada,
  ``Control barrier functions: Theory and applications,'' in \emph{Euro.
  Control Conf.}, 2019, pp. 3420--3431.

\end{thebibliography}
\end{document}